\newcommand{\mat}[1]{\ensuremath{{#1}}}
\DeclarePairedDelimiter\abs{\lvert}{\rvert}%
\DeclarePairedDelimiter\norm{\lVert}{\rVert}
\DeclarePairedDelimiter\ceil{\lceil}{\rceil}
\newcommand{\set}[1]{\ensuremath{\mathcal{#1}}}
\newcommand{\reals}[1]{\ensuremath{\mathbb{R}^{#1}}}
\newcommand{\instance}[2]{{\ensuremath{{#1}^{(#2)}}}}
\newcommand{\instsup}[3]{\instance{#1}{#2}^{#3}}
\newcommandx\LONGCOMMENT[2][2=0.5]{%
  \hfill $\triangleright~$\parbox{#2\linewidth}{\textit{#1}}%
}
\newcommand{\diagentry}[1]{\mathmakebox[1.8em]{#1}}
\newcommand{\xddots}{%
  \raise 6pt \hbox {.}
  \mkern 6mu
  \raise -1pt \hbox {.}
  \mkern 6mu
  \raise -8pt \hbox {.}
}
\newtheorem{claim}{Claim}[section]
\DeclareMathOperator{\corr}{corr}
\begin{document}

\title{\Large Harmonic Alignment} \setcounter{footnote}{1}
\author{Jay S. Stanley III\thanks{Equal contribution; $^\star$Equal contribution}$\;^,$\thanks{Yale University, Appl. Math. Prog., \href{mailto:jay.stanley@yale.edu}{\texttt{jay.stanley@yale.edu}}}
\and Scott Gigante\footnotemark[2]$\;^,$\thanks{Yale University, Comp. Bio. \& Bioinf. Prog., \href{mailto:scott.gigante@yale.edu}{\texttt{scott.gigante@yale.edu}}} \and
\and Guy Wolf$\,^\star{}^,$\thanks{Universit\'{e} de Montr\'{e}al, Dept. of Math. \& Stat.; Mila -- Quebec AI Institute, \href{mailto:guy.wolf@umontreal.ca}{\texttt{guy.wolf@umontreal.ca}}}
\and Smita Krishnaswamy$\,^\star{}^,$\thanks{Yale University, Depts. of Gene. \& Comp. Sci.; Corr. author, \href{mailto:smita.krishnaswamy@yale.edu}{\texttt{smita.krishnaswamy@yale.edu}}}}

\date{}

\maketitle


\fancyfoot[R]{\scriptsize{Copyright \textcopyright\ 2020 by SIAM\\
Unauthorized reproduction of this article is prohibited}}





\begin{abstract} \small\baselineskip=9pt
We propose a novel framework for combining datasets via alignment of their intrinsic geometry. This alignment can be used to fuse data originating from disparate modalities, or to correct batch effects while preserving intrinsic data structure. Importantly, we do not assume any pointwise correspondence between datasets, but instead rely on correspondence between a (possibly unknown) subset of data features. We leverage this assumption to construct an isometric alignment between the data. This alignment is obtained by relating the expansion of data features in harmonics derived from diffusion operators defined over each dataset. These expansions encode each feature as a function of the data geometry. We use this to relate the diffusion coordinates of each dataset through our assumption of partial feature correspondence. Then, a unified diffusion geometry is constructed over the aligned data, which can also be used to correct the original data measurements. We demonstrate our method on several datasets, showing in particular its effectiveness in biological applications including fusion of single-cell RNA sequencing (scRNA-seq) and single-cell ATAC sequencing (scATAC-seq) data measured on the same population of cells, and removal of batch effect between biological samples.
\end{abstract}

\section{Introduction}
\label{sec:introduction}
High dimensional data have become increasingly common in many fields of science and technology, and with them the need for robust representations of intrinsic structure and geometry in data, typically inferred via manifold learning methods. Furthermore, modern data collection technologies often produce multisample data, which contain multiple datasets (or data batches) that aim to capture the same phenomena but originate from different equipment, different calibration, or different experimental environments. These introduce new challenges in manifold learning, as na\"ive treatment in such cases produces data geometry that largely separates different data batches into separate manifolds. Therefore, special processing is required to align and integrate the data manifolds in such cases in order to allow for the study and exploration of relations between and across multiple datasets. 

As a particular application field, we focus here on single cell data analysis, which has gained importance with the advent of new sequencing technologies, such as scRNA-seq and scATAC-seq. While numerous works have shown that manifold learning approaches are particularly effective on such data~\cite{moon2017manifold}, a common challenge in their analysis is a set of technical artifacts termed \emph{batch effects} (caused by data collection from separate experimental runs) that tend to dominate downstream analysis, unless explicitly corrected. For instance, it is often the case that na\"{i}ve data manifold construction groups such data into clusters that correspond to measurement time or equipment used, rather than by meaningful biological variations. Under such circumstances, it is necessary that batch artifacts be eliminated while actual biological differences between the samples be retained. Further, it can also be the case that different variables of data are measured on the same biological system. For example, cells from the same tissue can be measured with transcriptomic and proteomic technologies. However, the cells themselves are destroyed in each measurement, even though they are sampled from the same underlying cellular manifold. Therefore, there is no correspondence that can be established between the two sets of measurements directly.

Recent manifold learning methods focusing on multisample data often treat each dataset as a different ``view'' of the same system or latent manifold, and construct a multiview geometry (e.g., based on the popular diffusion maps framework~\cite{coifman2006diffusion}) to represent them (e.g.,~\cite{
ham2005semisupervised, 
wang2008manifold, coifman2014diffusion, 
tuia2016kernel, lederman2018learning, boumal2018heterogeneous}). Importantly, these methods often require at least partial, if not full, bijection between views (e.g., both sets of measurements conducted on the same cells), which is often impossible to obtain in experimental scenarios where data is collected asynchronously or independently. In particular, as mentioned above, genomic and proteomic data (especially at the single-cell resolution) often originate on destructive collection technologies, and thus data point (i.e., cell) correspondence becomes an impractical (if not impossible) assumption to impose on their analysis. Other works attempt to directly match data points, either in the ambient space~\cite{haghverdi2018batch, amodio2018magan} or by local data geometry~\cite{wang2009manifold}. These approaches can be very sensitive to differences in sampling density rather than data geometry, as discussed in \S\ref{sec:prob-setup}-\ref{sec:prelim}. Furthermore, a complete matching is often not feasible as certain datasets may contain distinct local phenomena (e.g., rare subpopulation only captured in one dataset but not present in the other).

In this paper, we formulate the processing of multisample data with no data point correspondence in terms of manifold alignment, and present an approach towards such alignment by bridging the geometric-harmonic framework provided by diffusion geometry~\cite{coifman2006diffusion} (\S\ref{sec:DM}) together with data feature filtering enabled by graph signal processing~\cite{shuman2013emerging} (\S\ref{sec:GFT}). Our alignment approach relies on correspondence between underlying features quantified by data collection or measurement systems, phrased here as feature correspondence, which is often more realistic that data point correspondence. Indeed, related systems often observe similar ``entities'' (e.g., cells, patients) and aim to capture related properties in them. As explained in \S\ref{sec:prob-setup} and \S\ref{sec:GFT}, we treat measured data features as manifold signals (i.e., over the data manifold) and relate them to intrinsic coordinates of a diffusion geometry~\cite{coifman2006diffusion} of each dataset, which also serve as intrinsic data harmonics. Then, as explained in \S\ref{sec:harmonic-alignment}, we leverage feature correspondence to capture pairwise relations between the intrinsic diffusion coordinates of the separate data manifolds (i.e., of each dataset). Finally, we use these relations to compute an isometric transformation that aligns the data manifolds on top of each other without distorting their internal structure.

We demonstrate the results of our method in \S\ref{sec:results} on artificial manifolds and single-cell biological data for both batch effect removal and multimodal data fusion. In each case, our method successfully aligns data manifolds such that they have appropriate neighbors both within and across the two datasets. Further, we show an application of our approach in transfer learning by applying a k-NN classifier to one unlabeled dataset based on labels provided by another dataset (with batch effects between them), and compare the classification accuracy before and after alignment. Finally, comparisons with recently developed methods such as the MNN-based method from~\cite{haghverdi2018batch} show significant improvements in performance and denoising by our harmonic alignment methods. 

\section{Problem setup}
\label{sec:prob-setup}
Let $X = \{x_1,\ldots x_M\}$ and $Y = \{y_1,\ldots y_N\}$ be two finite datasets that aim to measure the same phenomena. For simplicity, we assume that both datasets have the same number of features, i.e., $X,Y \subseteq \mathbbm{R}^n$ for some sufficiently high dimension $n$. We consider here a setting where these datasets are collected via different instruments or environment, but are expected to capture some equivalent information which can be used to align the two datasets. To leverage a manifold learning approach in such settings, we consider the common latent geometry of the data as an unknown manifold $\mathcal{M}$, which is mapped to the two feature spaces via functions $\mathbf{f},\mathbf{g} : \mathcal{M} \to \mathbbm{R}^n$ that represent the two data spaces. Namely, each data point $x \in X$ is considered as a result $x = \mathbf{f}(z) = (f_1(z),\ldots,f_n(z)) \in \mathbbm{R}^n$ for some $z \in \mathcal{M}$ and similarly each $y \in Y$ as a result of $y = \mathbf{g}(z) = (g_1(z),\ldots,g_n(z)) \in \mathbbm{R}^n$. Therefore, we aim to provide a common data representation of both datasets, which captures the geometry of $\mathcal{M}$ while allowing data fusion of $X,Y$ and integrated processing or analysis of their data features. 

We note that while we clearly do not have access to the points $z \in \mathcal{M}$ on the underlying manifold, we do have access to a finite sampling of the feature functions $f_s,g_s : \mathcal{M} \to \mathbbm{R}$, $s=1,\ldots,n$, by considering $X,Y$ as points-by-features matrices (i.e., rewriting them, by slight abuse of notation, as $X \in \mathbbm{R}^{M \times n}$ and $Y \in \mathbbm{R}^{N \times n}$, rather than finite subsets of $\mathbbm{R}^n$) and taking their corresponding columns. Further, previous multiview manifold learning methods typically consider aligned datasets, i.e., assuming that all feature functions are sampled over the same manifold points. Instead, here we remove this assumption, thus allowing independently sampled datasets, and replace it with a feature correspondence assumption. Namely, we assume the feature functions $f_s,g_s$ (for given $1 \leq s \leq n$) aim to capture similar structures in the data and should therefore share some common information, although each may also contain sensor-specific or dataset-specific bias. While this is an informal notion, it fits well with many experimental data collection settings.

\section{Preliminaries and background}
\label{sec:prelim}

\subsection{Diffusion maps}
\label{sec:DM}
To learn a manifold geometry from collected data we use the diffusion maps (DM) construction~\cite{coifman2006diffusion}, which we briefly describe here for one of the data spaces $X$, but is equivalently constructed on $Y$. This construction starts by considering local similarities, which we quantify via an anisotropic kernel 
\begin{equation}
\label{eqn:kernel}
\mathcal{K}(x_i,x_j) = \frac{\mathcal{G}(x_i,x_j)}{\|\mathcal{G}(x_i,\cdot)\|_1 \|\mathcal{G}(x_j,\cdot)\|_1},
\end{equation}
where $\mathcal{G}(x_i,x_j) = \exp \left(-\|x_i-x_j\|^2 / \sigma \right)$ is the Gaussian kernel with neighborhood radius $\sigma > 0$. As shown in \cite{coifman2006diffusion}, this kernel provides neighborhood construction that is robust to sampling density variations and enables separation of data geometry from its distribution. Next, the kernel $\mathcal{K}$ is normalized to define transition probabilities $p(x_i,y_j) = \mathcal{K}(x_i,x_j) / \|\mathcal{K}(x_i,\cdot)\|_1$ that define a Markovian diffusion process over the data. Finally, a DM is defined by organizing these probabilities in a row stochastic matrix $\mathbf{P}$ (typically referred to as the diffusion operator) as $\mathbf{P}_{ij} = p(x_i, x_j)$, and using its eigenvalues $1 = \lambda_1 \geq \lambda_2 \geq \cdots \geq \lambda_N$ and (corresponding) eigenvectors $\{\phi_j\}_{j=1}^N$ to map each $x_i \in {X}$ to diffusion coordinates $\Phi_t(x_i) = [\lambda_1^t \phi_1(x_i),\ldots,\lambda_N^t \phi_N(x_i)]^T$. The parameter $t$ in this construction represents a diffusion time or the number of transitions considered in the diffusion process. To simplify notations, we also use $\Phi_t = \{\Phi_t(x_i) : x_i \in X\}$ to denote the DM of the entire dataset $X$.  We note that in general, as $t$ increases, most of the eigenvalue weights $\lambda_j^t$, $j=1,\ldots,N$, become numerically negligible, and thus truncated DM coordinates (i.e., using only non-negligible weights) can be used for dimensionality reduction purposes, as discussed in~\cite{coifman2006diffusion}.

\subsection{Graph Fourier transform}
\label{sec:GFT}
A classic result in spectral graph theory (see, e.g.,~\cite{brooks1994geometry}) shows that the discrete Fourier basis (i.e., pure harmonics, such as sines and cosines, organized by their frequencies) can be derived as Laplacian eigenvectors of the ring graphs. This result was recently used in graph signal processing~\cite{shuman2013emerging} to define a \textit{graph Fourier transform} (GFT) by treating eigenvectors of the graph Laplacian as generalized Fourier harmonics (i.e., intrinsic sines and cosines over a graph). Further, as discussed in \cite{coifman2006diffusion,nadler2006diffusion}, diffusion coordinates are closely related to these Laplacian eigenvectors, and can essentially serve as geometric harmonics over data manifolds. 

In our case, we regard the kernel $\mathcal{K}$ from \S\ref{sec:DM} as a weighted adjacency matrix of a graph whose vertices are the data point in $X$. Then, the resulting normalized graph Laplacian is given by $\bm{\mathcal{L}} = \mathbf{I} - \mathbf{D}^{1/2} \mathbf{P} \mathbf{D}^{-1/2}$, where $\mathbf{D}$ is a diagonal matrix with $\mathbf{D}_{ii} = \|\mathcal{K}(x_i,\cdot)\|_1$. Therefore, the eigenvectors of $\bm{\mathcal{L}}$ can be written as $\psi_j = D^{1/2}\phi_j$ with corresponding eigenvalues $\omega_j = 1 - \lambda_j$. The resulting GFT of a signal (or function) $f$ over $X$ can thus be written as $\widehat{f}[j] = \langle f, \psi_j \rangle = \langle f, D^{1/2}\phi_j \rangle$. We note that here we treat either $\omega_j$  or $\lambda_j$ as providing a ``frequency'' organization of their corresponding eigenvectors $\psi_j$ or $\phi_j$ (treated as intrinsic harmonics). In the latter case, eigenvectors with higher eigenvalues correspond to lower frequencies on the data manifold, and vice versa. As noted before, the same construction of GFT here and DM in \S\ref{sec:DM} can be equivalently constructed for $Y$ as well. This frequency-based organization of diffusion coordinates derived from $X$ and $Y$, and their treatment as geometric harmonics, will be leveraged in \S\ref{sec:harmonic-alignment} to provide an isometric alignment between the intrinsic data manifolds represented by the DMs of the two datasets by also leveraging the (partial) feature correspondence assumption from \S\ref{sec:prob-setup}.

\subsection{Related work on manifold alignment}
\label{sec:related-works}

Algorithms for semi-supervised and unsupervised manifold alignment exist in classical statistics~\cite{gower1975procrustes,thompson1984canonical}, deep learning~\cite{zhu2017cyclegan,kim2017discogan,amodio2018magan} and manifold learning~\cite{haghverdi2018batch, wang2008manifold, wang2009manifold}. As mentioned in \S\ref{sec:introduction}, much work has been done on finding common manifolds between data based on known (partial) bijection between data points~\cite{coifman2014diffusion,tuia2016kernel,lederman2018learning, boumal2018heterogeneous}. In a sense, these methods can be regarded as nonlinear successors of the classic canonical correlation analysis (CCA)~\cite{thompson1984canonical}, in the same way as many manifold learning methods can be regarded as generalizing PCA. Indeed, similar to PCA, the CCA method finds a common linear projection, but on directions that maximize covariance or correlation (typically estimated empirically via known pointwise correspondence) between datasets rather than just variance within one of them. However, in this work we mainly focus on settings where no data point correspondence is available, and therefore we focus our discussion in this section on related work that operate in such settings.

One of the earliest attempts at manifold alignment (in particular, with no point correspondence), was presented in~\cite{wang2009manifold}, which proposes a linear method based on embedding a joint graph built over both datasets to preserve local structure in both manifolds. This method provides a mapping from both original features spaces to a new feature space defined by the joint graph, which is shared by both datasets with no assumption of feature correspondence. More recently, in biomedical data analysis, mutual nearest neighbors (MNN) batch correction~\cite{haghverdi2018batch} focuses on families of manifold deformations that are often encountered in biomedical data. There, locally linear manifold alignment is provided by calculating a correction vector for each point in the data, as defined by the distances from the point to all points for which it is a mutual $k$-nearest neighbor. This correction vector is then smoothed by taking a weighted average over a Gaussian kernel.

Beyond manifold learning settings, deep learning methods have been proposed to provide alignment and transfer learning between datasets. For example, cycle GANs~\cite{zhu2017cyclegan
} are a class of deep neural network in which a generative adversarial network (GAN) is used to learn a nonlinear mapping from one domain to another, and then a second GAN is used to map back to the original domain. These networks are then optimized to (approximately) satisfy cycle consistency constraints such that the result of applying the full cycle to a data point reproduces the original point. MAGAN~\cite{amodio2018magan} is a particular cycle GAN that adds a supervised partial feature correspondence to enforce alignment of two data manifolds over the mapping provided by the trained network. However, this correspondence can be disturbed by noise or sparsity in the data. 

Additionally, a similar problem exists in isometric shape matching, albeit limited to low dimensional data (i.e., shapes in at most three dimensions). For example, the method in~\cite{ovsjanikov2012maps} takes shapes with a known Laplace-Beltrami operator and aligns them using a representation of the corresponding eigenfunctions. This work was extended further in~\cite{pokrass2016sparse} to settings where ambient functions are defined intrinsically by the shape in order to learn region-region correspondences from an unknown bijection. Recent work~\cite{vestner2018pmf} has relaxed the requirement for shapes to be isometric and the need for prior knowledge of the Laplace-Beltrami operator, instead estimating the manifold with a kernel density estimate over the shape boundary. However, the application of these methods is limited to shapes, rather than a regime of point clouds as seen in high-dimensional data analysis.

In contrast, in this work we consider more general settings of aligning intrinsic data manifolds in arbitrary dimensions, while being robust to noise, data collection artifacts, and density variations. We provide a nonlinear method for aligning two datasets using their diffusion maps~\cite{coifman2006diffusion} (\S\ref{sec:DM}) under the assumption of a partial feature correspondence. Unlike MAGAN, we do not need to know in advance which features should correspond, and our results show that even with correspondence as low as 15\% we achieve good alignment between data. Further, unlike shape matching methods, we are not limited to datasets describing the boundary of a shape or dominated by density distribution. Our formulation allows us to obtain more information from datasets with partial feature correspondence than methods that assume no correspondence, but without the burden of determining in advance which or how many features correspond. To evaluate our method, in \S\ref{sec:results} we focus on comparison with MAGAN, as a leading representative of deep learning approaches, and MNN, as a leading representative of manifold learning approaches.
We note that to the best of our knowledge, the method in~\cite{wang2009manifold} is not provided with standard implementation, and our attempts at implementing the algorithm have significantly under performed other methods. For completeness, partial comparison to this method is demonstrated in Appendix~\ref{apx:wang}.

\section{Harmonic alignment}
\label{sec:harmonic-alignment}

\begin{figure*}
    \centering
    \includegraphics[width=\textwidth]{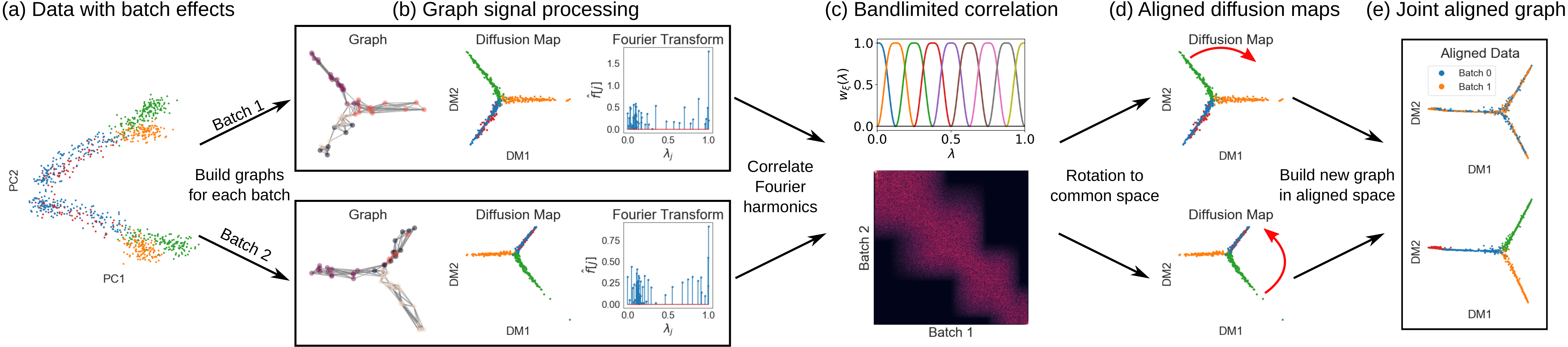}
    \caption{Schematic representation of the harmonic alignment method.}
    \label{fig:schema}
\end{figure*}

Given datasets $X,Y$, as described in \S\ref{sec:prob-setup}, we aim to construct a unified DM over both of them, which represents the global intrinsic structure of their common manifold $\mathcal{M}$ while still retaining local differences between the datasets (e.g., due to distributional differences or local patterns only available in one of the dataset). As mentioned before, global shifts and batch effects often make direct construction of such DM (or even the construction of local neighborhood kernels) over the union of both datasets unreliable and impractical. Instead, we propose here to first construct two separate DMs $\Phi^{(X)},\Phi^{(Y)}$ (based on eigenpairs $(\lambda^{(X)}_i,\phi^{(X)}_i)$, $i=1,\ldots,M$, and $(\lambda^{(Y)}_j,\phi^{(Y)}_j)$, $j=1,\ldots,N$, correspondingly), which capture the intrinsic geometry of each dataset. We then align their coordinates via an orthogonal transformation that preserves the rigid structure of the data in each DM, which is computed by orthogonalizing a correlation matrix computed between the diffusion coordinates of the two DMs. 

However, since the diffusion coordinates are associated with intrinsic notions of frequency on data manifolds (as explained in \S\ref{sec:GFT}, there is no need to compute the correlation between every pair $\phi^{(X)}_i,\phi^{(Y)}_j$, $i=1,\ldots,M$, $j=1,\ldots,N$. Indeed, leveraging the interpretation of such coordinate functions as intrinsic \emph{diffusion harmonics}, we can determine that they should not be aligned between the geometry of $X$ and $Y$ if their corresponding frequencies (i.e., captured via the eigenvalues $\lambda^{(X)}_i,\lambda^{(Y)}_j$) are sufficiently far from each other. Therefore, in \S\ref{sec:bandlimited-correlation} we describe the construction of a bandlimited correlation matrix, and then use it in \S\ref{sec:rigid-alignment} to align the two DMs. Since our alignment method is based on the treatment of DM coordinates as manifold harmonics, we call this method \emph{harmonic alignment}.

\subsection{Bandlimited correlation}
\label{sec:bandlimited-correlation}

In order to partition the diffusion harmonics into local frequency bands, we consider the following window functions, which are inspired by the itersine filter bank construction~\cite{perraudin2014designing}:
\begin{equation*}
\label{eqn:itersine}
w_{\xi}(\lambda) = \begin{cases}
\sin\left(\frac{\pi}{2}\cos^2\left(\frac{\pi}{2} (\ell\lambda-\xi) \right)\right) & \frac{\xi - 1}{\ell} \leq \lambda \leq \frac{\xi + 1}{\ell} \\
0 & \text{otherwise,}
\end{cases}
\end{equation*}
where $\xi = 0,\ldots,\ell$ and $\ell$ considered as a meta-parameter of the construction. We note that experimental evidence indicate that fine tuning $\ell$ does not significantly affect alignment quality. Each window $w_\xi(\cdot)$ is supported on an interval of length $2\ell$ around $\xi/\ell$, while decaying smoothly from $w_\xi(\xi/\ell) = 1$ to zero. Two consecutive windows (i.e., $w_\xi(\cdot),w_{\xi+1}(\cdot)$) share an overlap of half their support; otherwise (i.e., $w_\xi(\cdot),w_{\xi^\prime}(\cdot)$ with $|\xi - \xi^\prime| \geq 2$) they have disjoint supports. Finally, we recall the spectra (i.e., eigenvalues) of $P^{(X)},P^{(Y)}$ are contained in the interval $[0,1]$, which is entirely covered by $\ell+1$ window functions $w_\xi(\cdot)$, $\xi=0,\ldots,\ell$, as illustrated in Fig.~\ref{fig:schema}(c). Notice that only half the support of $w_0(\cdot)$ and $w_\ell(\cdot)$ are shown in there, since half of their support is below zero or above one, correspondingly.  

Using the soft partition defined by $w_\xi(\cdot)$, $\xi=0,\ldots,\ell$, we now define bandlimiting weights
\begin{equation}
\label{eq:weights}
w_{ij}^{(X,Y)} = \sum_{\xi=1}^{\ell}w_{\xi}(\lambda^{(X)}_i) w_{\xi}(\lambda^{(Y)}_j),
\end{equation}
for $i = 1,\ldots,M$ and $j = 1,\ldots,M$, between diffusion harmonics of $X$ and $Y$. As shown in the following lemma, whose proof appears in Appendix~\ref{apx:lemma}, these weights enable us to quantitatively identify diffusion harmonics that correspond similar frequencies and ignore relations between ones that have significantly different ones. 
\begin{lemma}
\label{lemma:bandlimiting}
The bandlimiting weights from Eqn.~\ref{eq:weights} satisfy the following properties: $w_{ij}^{(X,Y)}$ is continuous and differentiable in $\lambda_i^{(X)}$ and $\lambda_j^{(Y)}$; if $\lambda_i^{(X)} = \lambda_j^{(Y)}$ then $w_{ij} = 1$; if $|\lambda_i^{(X)} - \lambda_j^{(Y)}| > \frac{2}{\ell}$ then $w_{ij} = 0$; and the rate of change of $w_{ij}^{(X,Y)}$ w.r.t.\ $|\lambda_i^{(X)} - \lambda_j^{(Y)}|$ is bounded by $O(\ell)$.
\end{lemma}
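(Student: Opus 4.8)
The plan is to reduce all four properties to elementary facts about a single itersine window $w_\xi(\cdot)$ together with its overlap structure, since $w_{ij}^{(X,Y)}$ is a finite sum of products $w_\xi(\lambda_i^{(X)})\,w_\xi(\lambda_j^{(Y)})$. For continuity and differentiability I would first treat one window: on the interior of its support $[\,(\xi-1)/\ell,\,(\xi+1)/\ell\,]$ it is a composition of $\sin$, $\cos$, and the affine map $\lambda \mapsto \ell\lambda - \xi$, hence smooth, so the only concern is the two endpoints where $w_\xi$ must glue to the zero branch. Substituting $\ell\lambda - \xi = \pm 1$ gives $\cos^2(\pm\pi/2) = 0$ and thus $w_\xi = \sin(0) = 0$; differentiating by the chain rule, the derivative carries a factor $\cos(\frac{\pi}{2}(\ell\lambda-\xi))$ that also vanishes at $\pm 1$. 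Hence each $w_\xi$ is $C^1$ on all of $\mathbb{R}$, and the finite sum of products $w_{ij}^{(X,Y)}$ inherits continuity and differentiability in both $\lambda_i^{(X)}$ and $\lambda_j^{(Y)}$.

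For the value at equal frequencies I would establish the partition-of-unity identity $\sum_\xi w_\xi(\lambda)^2 = 1$. By the half-support overlap, at most two consecutive windows $w_{\xi_0},w_{\xi_0+1}$ are nonzero at a given $\lambda$; writing $v = \ell\lambda - \xi_0 \in [0,1]$, the crucial step is the identity $\cos(\frac{\pi}{2}(v-1)) = \sin(\frac{\pi}{2}v)$, which recasts $w_{\xi_0+1}$ as $\cos(\frac{\pi}{2}\cos^2(\frac{\pi}{2}v))$ while $w_{\xi_0} = \sin(\frac{\pi}{2}\cos^2(\frac{\pi}{2}v))$. Their squares then sum to $1$ by the Pythagorean identity. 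Setting $\lambda_i^{(X)} = \lambda_j^{(Y)} =: \lambda$ in Eqn.~\ref{eq:weights} collapses the double-window sum to $\sum_\xi w_\xi(\lambda)^2 = 1$, giving $w_{ij} = 1$.

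For the vanishing property I would note that each $w_\xi$ is supported on an interval of length $2/\ell$, so a product $w_\xi(\lambda_i^{(X)})\,w_\xi(\lambda_j^{(Y)})$ is nonzero only if both arguments lie in that interval, forcing $|\lambda_i^{(X)} - \lambda_j^{(Y)}| \leq 2/\ell$; contrapositively a larger gap annihilates every summand and $w_{ij} = 0$. For the rate of change, the chain-rule computation above yields $|w_\xi'(\lambda)| \leq \frac{\pi^2}{4}\ell$, since the sole $\ell$-dependence enters through the affine argument while every trigonometric factor is bounded by $1$. With at most two active windows per variable, the partial derivatives of $w_{ij}^{(X,Y)}$ are therefore $O(\ell)$, and because the transition from $w_{ij} = 1$ to $w_{ij} = 0$ is confined to a difference-interval of length $2/\ell$, the rate of change with respect to $|\lambda_i^{(X)} - \lambda_j^{(Y)}|$ is $O(\ell)$ as claimed.

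I expect the partition-of-unity step to be the main obstacle, since it is the only place where the precise functional form of the itersine window is used rather than merely its support and smoothness; the remaining properties reduce to the support length and a routine bound on the derivative that all flow from the single chain-rule computation in the first step.
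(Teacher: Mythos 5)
Your proposal is correct and takes essentially the same route as the paper's proof in Appendix~\ref{apx:lemma}: endpoint gluing of each itersine window to its zero branch (value and derivative both vanishing at $\ell\lambda-\xi=\pm 1$), the at-most-two-active-windows support claim, the identity $\cos\bigl(\tfrac{\pi}{2}(v-1)\bigr)=\sin\bigl(\tfrac{\pi}{2}v\bigr)$ converting the two overlapping windows into a sine--cosine pair whose squares sum to one, disjoint supports for the vanishing property, and the per-window bound $\lvert w_\xi'(\lambda)\rvert \leq \tfrac{\pi^2\ell}{4}$ yielding the $O(\ell)$ rate. The only cosmetic difference is that you state the partition-of-unity computation in one compact step where the paper spells out the trigonometric identities sequentially.
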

Next, we use the weights from Eqn.~\ref{eq:weights} to construct a $M \times N$ bandlimited correlation matrix $C$ defined as 
\begin{equation}
\label{eq:band-corr}
\left[C\right]_{ij} = w_{ij}^{(X,Y)} \corr\left(\phi^{(X)}_i,\phi^{(Y)}_j\right)
\end{equation}
for $i=1,\ldots,M$ and $j=1,\ldots,N$, which only considers correlations between diffusion harmonics within similar frequency bands.

Finally, for each $i,j$ with nonzero weight $w_{ij}^{(X,Y)}$, we now need to compute a correlation between the diffusion harmonics $\phi^{(X)}_i,\phi^{(Y)}_j$. If we had partial data point correspondence, as is assumed in many previous work (e.g.,~\cite{coifman2014diffusion,lederman2018learning}), we could estimate such correlation directly from matching parts of the two datasets. However, in our case we do not assume any a priori matching between data points. Instead, we rely on the assumed feature correspondence and leverage the GFT from \S\ref{sec:GFT} to express the harmonics $\phi^{(X)}_i,\phi^{(Y)}_j$ in terms of the data features via their Fourier coefficients. Namely, we take the GFT of the data features $f_s,g_s$, $s=1,\ldots,n$ (i.e., the ``columns'' of the points-by-features representation of $X,Y$ as $M \times n, N \times n$ data matrices, correspondingly), and use them to represent $\phi^{(X)}_i,\phi^{(Y)}_j$ by the $n$ dimensional vectors $\hat{x}_i = (\hat{f}_1[i],\ldots,\hat{f}_n[i])^T$ and $\mathbf{\hat{y}_j} = (\hat{g}_1[j],\ldots,\hat{g}_n[j])^T$, correspondingly. Then, we compute a correlation between the harmonics $\phi^{(X)}_i,\phi^{(Y)}_j$ indirectly via a correlation between $\mathbf{\hat{x}_i},\mathbf{\hat{y}_j}$. For simplicity, and by slight abuse of terminology, we use an inner product in lieu of the latter, to define
\begin{equation*}
\corr\left(\phi^{(X)}_i,\phi^{(Y)}_j\right) = \langle \mathbf{\hat{x}_i},\mathbf{\hat{y}_j} \rangle \, .
\end{equation*}
Therefore, together with Eqn.~\ref{eq:band-corr}, our bandlimited correlation matrix is given by $\left[C\right]_{ij} = w_{ij}^{(X,Y)} \langle \mathbf{\hat{x}_i},\mathbf{\hat{y}_j} \rangle = \sum_{\xi=1}^{\ell} \langle w_{\xi}(\lambda^{(X)}_i) \mathbf{\hat{x}_i}, w_{\xi}(\lambda^{(Y)}_j) \mathbf{\hat{y}_j} \rangle$.

\subsection{Rigid alignment}
\label{sec:rigid-alignment}
Given the bandlimited correlation matrix $C$, we use its SVD given by $C = U \Sigma V^T$ to obtain its nearest orthogonal approximation $\mathbf{T} = U V^T$ (e.g., as shown in~\cite{schonemann1966generalized}) that defines an isometric transformation between the diffusion maps of the two samples, which we refer to as harmonic alignment. Finally, we can now compute a unified diffusion map, which can be written in (block) matrix form as
\begin{equation}
\label{eqn:aligned-DM}
\Phi_t^{(X,Y)} = 
    \begin{bmatrix}
    \Phi_{0}^{(X)} & \Phi_0^{(X)}~\mathbf{T}  \\
    \Phi_{0}^{(Y)}~\mathbf{T}^{T} & \Phi_0^{(Y)}
    \end{bmatrix} \;
    \begin{bmatrix}
        \Lambda^{(X)} & 0 \\
        0 & \Lambda^{(X)}
    \end{bmatrix}^{\textstyle{t}} \, ,
\end{equation}
where $\Lambda^{(X)},\Lambda^{(Y)}$ are diagonal matrices with the diffusion eigenvalues $\{\lambda_i^{(X)}\}_{i=1}^N$,$\{\lambda_j^{(Y)}\}_{j=1}^M$ (correspondingly) as their main diagonal, and $t$ is an integer diffusion time parameter as in \S\ref{sec:DM}.
A summary of the described steps is presented in Appendix~\ref{apx:algorithm}.
While this construction is presented here in terms of two datasets for simplicity, it can naturally be generalized to multiple datasets by considering multiple blocks (rather than the two-by-two block structure in Eqn.~\ref{eqn:aligned-DM}), based on orthogonalizing pairwise bandlimited correlations between datasets. This generalization is discussed in detail in Appendix~\ref{apx:algorithm/multi}.

Finally, given aligned DMs in $\Phi_t^{(X,Y)}$, we can construct a new neighborhood kernel over their coordinates (i.e., in terms of a combined diffusion distance) and build a robust unified diffusion geometry over the entire entire data in $X \cup Y$ that is invariant to batch effects and also enables denoising of data collection artifacts that depend on environment or technology rather than the underlying measured phenomena. This diffusion geometry can naturally be incorporated in diffusion-based methods for several data processing tasks, such as dimensionality reduction \& visualization~\cite{moon2017visualizing}, denoising \& imputation~\cite{van2018recovering}, latent variable inference~\cite{lederman2018learning
}, and data generation~\cite{lindenbaum2018sugar}. In particular, in \S\ref{subsec:biology} we demonstrate the application of harmonic alignment to batch effect removal and multimodal data fusion with various single-cell genomic technologies.

\color{black}

\section{Numerical results}
\label{sec:results}

\subsection{Artificial feature corruption}
\label{subsec:featurecorruption}

\begin{figure*}[!htb]
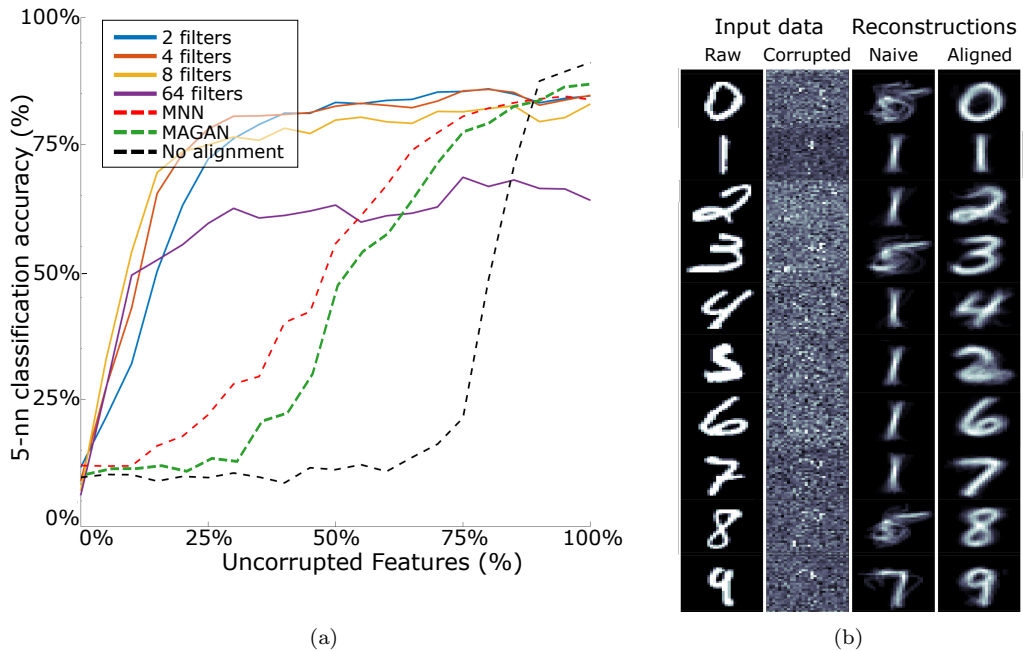
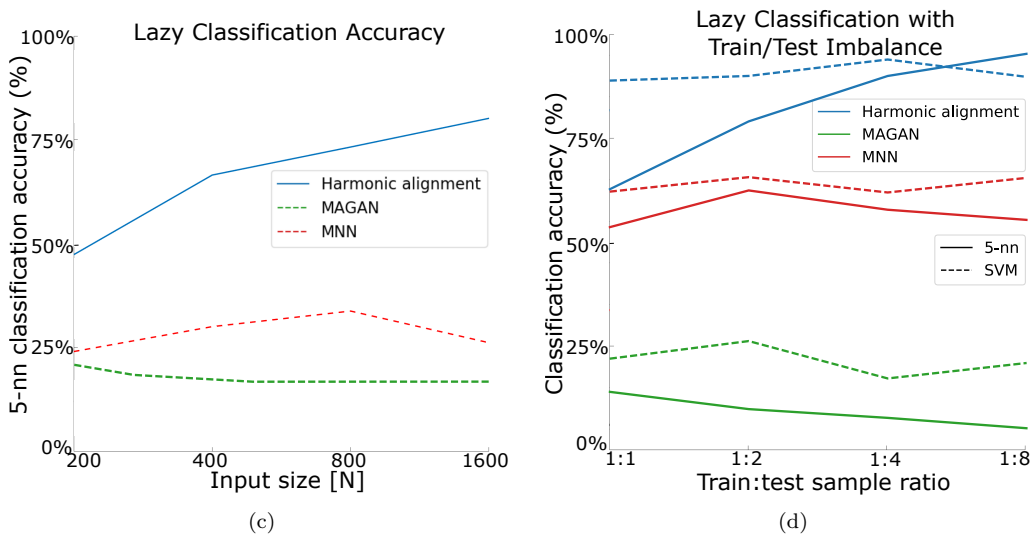

    \centering
    \parbox{0.8\linewidth}{
    \subfigure[]{\adjustbox{
    width=0.64\linewidth
    }{\input{Figures/MNIST/mnist_columncorrespondence.tex}}\label{subfig:KNNrecovery}} \hfill
    \subfigure[]{\adjustbox{
    width=0.335\linewidth
    }{\input{Figures/MNIST/mnist_recons.tex}}\label{subfig:KNNrecons}} \hfill
    }
    \parbox{0.8\linewidth}{
    \subfigure[]{\adjustbox{
    width=0.485\linewidth
    }{\input{Figures/Comparison/accuracy.tex}} \label{subfig:compclassification}} \hfill
    \subfigure[]{\adjustbox{
    width=0.485\linewidth
    }{\input{Figures/Comparison/mnist_databalance.tex}}\label{subfig:transferlearning}}}
    \caption{Recovery of k-nearest neighborhoods under feature corruption. Mean over 3 iterations is reported for each method.
    \protect\subref{subfig:KNNrecovery} At each iteration, two sets $X$ and $Y$ of $1000$ points were sampled from MNIST.  $Y$ was then distorted by a $784 \times 784$ corruption matrix $\mathbf{O}_p$ for various identity percentages $p$ (\S\ref{subsec:featurecorruption}).  Subsequently, a lazy classification scheme was used to classify points in  $Y\mathbf{O}_p$ using a 5-nearest neighbor vote from $X$. Results for harmonic alignment with $\ell \in \{2, 4, 8, 64\}$ (\S\ref{sec:bandlimited-correlation}), mutual nearest neighbors (MNN), and classification without alignment are shown.
    \protect\subref{subfig:KNNrecons} Reconstruction of digits with only 25\% uncorrupted features.  Left: Input digits. Left middle: 75\% of the pixels in the input are corrupted. Right middle: Reconstruction without harmonic alignment. Right: Reconstruction after harmonic alignment.
    \protect\subref{subfig:compclassification} Lazy classification accuracy relative to input size with unlabeled randomly corrupted digits with 35\% preserved pixels.
    \protect\subref{subfig:transferlearning} Transfer learning performance.  For each ratio, 1K uncorrupted, labeled digits were sampled from MNIST, and then 1K, 2K, 4K, and 8K (x-axis) unlabeled points were sampled and corrupted with 35\% column identity.}
    \label{fig:corruptions}
\end{figure*}

To demonstrate the accuracy of harmonic alignment, we assess its ability to recover $k$-nearest neighborhoods after random feature corruption, and compare it to MNN~\cite{haghverdi2018batch} and MAGAN~\cite{amodio2018magan}, which are leading manifold- and deep-learning methods respectively, as discussed in \S\ref{sec:related-works}. To this end, we drew two random samples $X$ and $Y$ of $1000$ MNIST digit images, each of which is a $784$-dimensional vector. For each trial, we generate a random orthogonal $784 \times 784$ corruption matrix $\mathbf{O}_0$. To vary the amount of feature corruption, we produce partial corruption matrices $\mathbf{O}_p$ by randomly substituting $p\%$ of the columns in $\mathbf{O}_0$ with columns of the identity matrix. Right multiplication of $Y$ by these matrices yields corrupted images with only $p\%$ preserved pixels (Fig.~\ref{subfig:KNNrecons}, `Corrupted').

To assess the alignment of the corrupted images $Y\mathbf{O}_p$ to the uncorrupted images $X$, we perform lazy classification on digits (i.e., rows) in $Y\mathbf{O}_p$ by using the labels of each aligned image's $k$ nearest neighbors in $X$. The results of this experiment, performed for $p = \{ 0,5,10,\ldots 95,100\}$, are reported in Fig.~\ref{subfig:KNNrecovery}. For robustness, at each $p$ we sampled three different non-overlapping pairs $X, Y$, and for each pair we sampled three random $\mathbf{O}_p$ matrices. It should be noted that while we report results in terms of mean classification accuracy, we do not aim to provide an optimal classifier here. Our evaluation merely aims to provide a quantitative assessment of neighborhood quality before and after alignment. We regard a lazy learner as ideal for such evaluation since it directly exposes the quality of data neighborhoods, rather than obfuscate it via a trained model. For comparison, results for harmonic alignment with a SVM classifier are shown in \S\ref{subsec:transfer} and Fig.~\ref{subfig:transferlearning}.

In general, none of the methods recovers $k$-nearest neighborhoods under total corruption, showing 10\% accuracy for very small $p$, essentially giving random chance accuracy. Note that this case clearly violates our (partial) feature correspondence assumption. However, when using sufficiently many bandlimited filters, harmonic alignment quickly recovers over $80\%$ accuracy and consistently outperforms both MNN and MAGAN, except under under very high correspondence (i.e., when $\mathbf{O}_p \approx \mathbf{I}$). The method proposed by~\cite{wang2009manifold} was excluded since it did not show improvement over unaligned classification, but is discussed in supplemental materials for completeness. We note that the performance of harmonic alignment is relatively invariant to the choice of $\ell$, with the exclusion of extremely high values. For the remainder of the experiments, we fix $\ell = 8$. All experiments use the default parameter $t = 1$.

Next, we examined the ability of harmonic alignment to reconstruct the corrupted data (Fig.~\ref{subfig:KNNrecons}). We performed the same corruption procedure with $p=25\%$ and selected one example of each MNIST digit. Ground truth from $Y$ and corrupted result $Y\mathbf{O}_{25}$ are shown in Fig.~\ref{subfig:KNNrecons}. Then, reconstruction was performed by setting each pixel in a new image to the dominant class average of the $10$ nearest neighbors from $X$. In the unaligned case, we see that most examples give smeared fives or ones; this is likely a random intersection formed by $X$ and $Y\mathbf{O}_{25}$. On the other hand, reconstructions produced by harmonic alignment resemble the original input examples.

Finally, in Fig.~\ref{subfig:compclassification}, we consider the effect of data size on obtained alignment. To this end, we fix $p=35\%$ and vary the size of the two aligned datasets. We compare harmonic alignment, MNN, and MAGAN on input sizes ranging from 200 to 1600 MNIST digits, while again using lazy classification accuracy to measure neighborhood preservation and quantify alignment quality. The results in Fig.~\ref{subfig:transferlearning} show that both MNN and MAGAN are not significantly affected by dataset size, and in particular do not improve with additional data. Harmonic alignment, on the other hand, not only outperforms them significantly -- its alignment quality increases monotonically with input size.

\begin{figure*}[!htbp]
    \centering
    \subfigure[]{\adjustbox{width=0.17\linewidth}{\input{Figures/Biology/noisy.tex}}\label{subfig:noisy}} \hfill
    \subfigure[]{\adjustbox{width=0.17\linewidth}{\input{Figures/Biology/unaligned.tex}}\label{subfig:biounaligned}} \hfill
    \subfigure[]{\adjustbox{width=0.17\linewidth}{\input{Figures/Biology/aligned.tex}} \label{subfig:bioaligned}} \hfill
    \subfigure[]{\adjustbox{width=0.42\linewidth}{\includegraphics{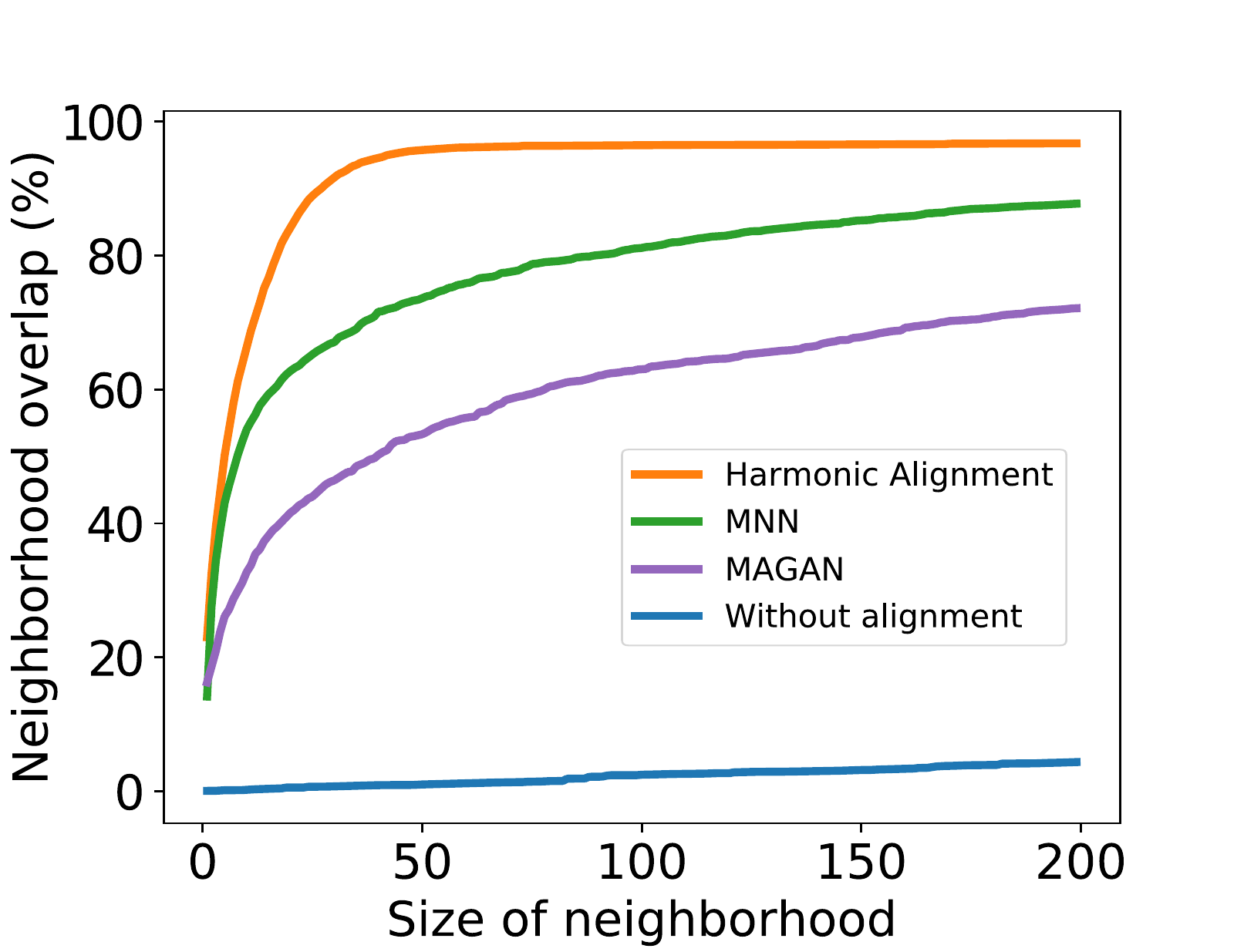}} \label{subfig:rnaseq_atacseq}}
    \caption{\protect\subref{subfig:noisy}-\protect\subref{subfig:bioaligned} \textit{Batch effect removal.} 4K cells were subsampled from two single-cell mass cytometry immune profiles on blood samples of two patients infected with Dengue fever. \emph{Top:} Both patients exhibit heightened IFN$\gamma$ (x-axis), a pro-inflammatory cytokine associated with TNF$\alpha$ (y-axis)
    \emph{Bottom:} IFN$\gamma$ histograms for each batch.
    \protect\subref{subfig:noisy} Data before denoising.
    \protect\subref{subfig:biounaligned} Denoising of unaligned data enhances a technical effect between samples in IFN$\gamma$. \protect\subref{subfig:bioaligned} Harmonic alignment corrects the IFN$\gamma$ shift.
    \protect\subref{subfig:rnaseq_atacseq} \textit{Multimodal data fusion.} Overlap of cell neighborhoods from joint gene expression and chromatin profiling of single cells. Harmonic alignment most accurately recovers the pointwise relationship between the manifolds.}
    \label{fig:biology}
\end{figure*}

\subsection{Transfer learning}
\label{subsec:transfer}

An interesting use of manifold alignment algorithms is transfer learning.  In this setting, an algorithm is trained to perform well on a small (e.g., pilot) dataset, and the goal is to extend the algorithm to a new larger dataset (e.g., as more data is being collected) after alignment. In this experiment, we first randomly selected $1,000$ uncorrupted examples of MNIST digits, and constructed their DM to use as our training set. Next, we took $65\%$-corrupted unlabeled points (\S\ref{subsec:featurecorruption}) in batches of $1,000$, $2,000$, $4,000$, and $8,000$, as a test set for classification using the labels from the uncorrupted examples. As shown in~\ref{subfig:transferlearning}, with a 5-nearest neighbor lazy classifier, harmonic alignment consistently improves as the dataset gets larger, even with up to \textit{eight} test samples for every one training sample. When the same experiment is performed with a linear SVM, harmonic alignment consistently outperforms other methods with performance being independent of test set size (or train-to-test ratio). This is due to the increased robustness and generalization capabilities of trained SVM. Further discussion of transfer learning is given in the supplementary materials. In addition to showing the use of manifold alignment in transfer learning, this example also demonstrates the robustness of our algorithm to imbalance between samples. 

\subsection{Biological data}
\label{subsec:biology}

\subsubsection{Batch effect correction}
To illustrate the need for robust manifold alignment in computational biology, we turn to a simple real-world example from \cite{amodio2018exploring} (Fig.~\ref{fig:biology}).  This dataset was collected by mass
cytometry (CyTOF) of peripheral blood mononuclear cells from patients who contracted dengue fever~\cite{amodio2018exploring}.

The canonical response to dengue infection is upregulation of interferon gamma (IFN$\gamma$).\cite{
chakravarti2006circulating
} During early immune response, IFN$\gamma$ works in tandem with acute phase cytokines such as tumor necrosis factor alpha (TNF$\alpha$) to induce febrile response and inhibit viral replication~\cite{ohmori1997synergy}. We thus expect to see upregulation of these two cytokines together. 

In Fig.~\ref{subfig:noisy}, we show the relationship between IFN$\gamma$ and TNF$\alpha$ without denoising. Note that there is a substantial difference between the IFN$\gamma$ distributions of the two samples (Earth Mover's Distance [EMD] = 2.699). In order to identify meaningful relationships in CyTOF data, it is common to denoise it first~\cite{moon2017manifold}. We used a graph low-pass filter proposed in~\cite{van2018recovering} to denoise the cytokine data. The results of this denoising are shown in Fig.~\ref{subfig:biounaligned}. This procedure introduced more technical artifacts by enhancing differences between batches, as seen by the increased EMD (3.127) between IFN$\gamma$ distributions of both patients. This is likely due to substantial connectivity differences between the two batch submanifold in combined data manifold. 

Next, we performed harmonic alignment of the two patient profiles (Fig.~\ref{subfig:bioaligned}). Harmonic alignment corrected the difference between IFN$\gamma$ distributions and restored the canonical correlation of IFN$\gamma$ and TNF$\alpha$ (EMD=0.135). This example illustrates the utility of harmonic alignment for biological data, where it can be used for integrated analysis of data collected across different experiments, patients, and time points.

\subsubsection{Multimodal Data Fusion}
\label{sec:multimodal-fusion}

Since cells contain numerous types of components that are informative of their state (genes, proteins, epigenetics), modern experimental technologies are starting to measure of each of these components separately at the single cell level. Since most single-cell assays are destructive, it is challenging or impossible to obtain all desired measurements in the same cells. It is therefore desirable to perform each assay on a subset of cells from a single sample, and align these datasets \textit{in silico} to obtain a pseudo-joint profile of the multiple data types.

To demonstrate the utility of harmonic alignment in this setting, we use a dataset obtained from~\cite{cao2018joint} of 11,296 cells from adult mouse kidney collected by a joint measurement technique named sci-CAR, which measures \textit{both} gene expression (scRNA-seq) and chromatin accessibility (scATAC-seq) in the same cells simultaneously. The datasets are normalized separately as in~\cite{van2018recovering}, using a square root transformation for the scRNA-seq and a log transformation with a pseudocount of 1 for the scATAC-seq data, and finally the dimensionality of each dataset is reduced to 100 using truncated SVD. After randomly permuting the datasets to scramble the correspondence between them, we align the two manifolds in order to recover the known bijection between data modalities. Let $f(i) \in F$ be the scRNA-seq measurement of cell $i$, and $g(i) \in G$ be the scATAC-seq measurement of cell $i$. Fig.~\ref{subfig:rnaseq_atacseq} shows the average percentage overlap of neighborhoods of $f(i)$ in $F$ with neighborhoods of $g(i)$ in $G$, before and after alignment with: MAGAN, MNN and Harmonic Alignment. Harmonic Alignment most accurately recovers cell neighborhoods, thereby allowing the generation of \textit{in silico} joint profiles across data types and obviating the need for expensive or infeasible \textit{in vitro} joint profiling.

\section{Conclusion}

We presented a novel method for processing multisample data, which contains multiple sampled datasets that differ by global shifts or \emph{batch effects}. To perform data fusion and provide a single stable representation of the entire data, we proposed to learn an intrinsic diffusion geometry of each individual datasets and then align them together using the duality between diffusion coordinates used in manifold learning and manifold harmonics used in graph signal processing. While previous methods for data manifold alignment relied on known bijective correspondence between data points, our method replaces such strict requirement by considering feature correspondence in the sense that corresponding features across samples or datasets should have similar intrinsic regularity (or ``frequency'' composition) on the diffusion geometry of each sampled dataset in the data. Our \textit{harmonic alignment} leverages this understanding to compute cross-dataset similarity between manifold harmonics, which is then used to construct an isometric transformation that aligns the data manifolds. Results show that our method is effective in resolving both artificial misalignment and biological batch effects, thus allowing data fusion and transfer learning. We expect future applications of harmonic alignment to include, for example, the use of multimodal data fusion to understand complex molecular processes through three or more different data modalities.

\section*{Acknowledgments}
\noindent This work was partially funded by: the Gruber Foundation [\emph{S.G.}]; IVADO (l'institut de valorisation des donn\'{e}es) [\emph{G.W.}]; Chan-Zuckerberg Initiative grants 182702 \& CZF2019-002440 [\emph{S.K.}]; and NIH grants R01GM135929 \& R01GM130847 [\emph{G.W.,S.K.}].

\bibliographystyle{ieeetr}
\bibliography{main}

\appendix
\onecolumn
\section{Algorithm}
\label{apx:algorithm}

\subsection{Standard implementation}
\label{apx:algorithm/naive}

Consider the standard setting of harmonic alignment as described in Section~\ref{sec:harmonic-alignment}. We provide here a detailed description of the Harmonic Alignment algorithm in pseudocode.

Let $X$ and $Y$ be collections of data points $\vec{x} \in \reals{d}$. Let the diffusion time $t$ and band count $\ell$ be positive integers. Let $\instance{\sigma}{X}, \instance{\sigma}{Y}$ be bandwidth functions (generally either positive constants or a function of the distance from a point to its \textit{k}th nearest neighbor) and the anisotropy $\instance{q}{X}, \instance{q}{Y} \in [0,1]$. Let the kernel parameters $\instance{\set{K}}{i} = \left\{ \instance{\sigma}{i}, \instance{q}{i} \right\}$.

Then the aligned data is given by $\instance{\Phi_t}{X,Y} = \mathrm{ HarmonicAlignment}\left(X, Y,\left\{\instance{\set{K}}{X}, \instance{\set{K}}{Y}\right\},t,\ell \right)$, where the first $\#\abs X$ points in $\instance{\Phi_t}{X,Y}$ are the aligned coordinates of $X$ and the last $\#\abs Y$ points in $\instance{\Phi_t}{X,Y}$ are the aligned coordinates of $Y$.

\begin{algorithm}[htbp]
\begin{tabular}{l l l}

\textbf{Input:} & Data sets & $\left\{X, Y\right\}$ \\ & Kernel parameters & $\set{K} = \left\{\instance{\set{K}}{X}, \instance{\set{K}}{Y}\right\}$ \\ & Alignment diffusion time & $t$ \\ & Alignment band count & $\ell$ \\
\textbf{Output:} & Aligned diffusion map & $\instance{\Phi_t}{X,Y}$ 
\end{tabular} 
    \begin{algorithmic}[1]
    \For{Z=X,Y}
    \State $\instance{\set{G}}{Z} \leftarrow \left\{\instance{W}{Z}, \instance{\set{L}}{Z}, \instance{D}{Z}\right\} \leftarrow \mathrm{GaussKernelGraph}\left(Z, \instance{\set{K}}{Z}\right)$\LONGCOMMENT{Alg.~\ref{alg:gausskernel}}[0.3]
    \State $\instance{\Psi}{Z}, \instance{\Lambda}{Z} \leftarrow \mathrm{SVD}\left(I-\instance{\set{L}}{Z}\right)$ \LONGCOMMENT{Graph Fourier basis} [0.3]
    \EndFor
\State $\instance{\Phi_t}{X,Y} \leftarrow \mathrm{Align}\left(\left\{\instance{\Psi}{X}, \instance{\Lambda}{X}, \instance{D}{X}\right\}, \left\{\instance{\Psi}{Y}, \instance{\Lambda}{Y}, \instance{D}{Y}\right\},t, \ell\right)$ \LONGCOMMENT{Alg.~\ref{alg:align}}[0.3]
    \State \Return $\instance{\Phi_t}{X,Y}$

    \end{algorithmic}

    \captionof{algorithm}[function HarmonicAlignment: Align the diffusion maps of two datasets.]{function HarmonicAlignment$\left(X, Y,\set{K},t,\ell \right)=\protect\instance{\Phi_t}{X,Y}$ }
    \label{alg:harmonicalignment}
\end{algorithm}

\begin{algorithm}[htbp]
\begin{tabular}{l l l}

\textbf{Input:} & Dataset & $X = \{\vec{x}_1,\ldots, \vec{x}_N : x \in \chi\}\subseteq \reals{d}$ \\ & Bandwidth function & $\sigma : X \mapsto \reals{}$ \\
\textbf{Output:} & Kernel matrix &\mat{W}\\
& Degree matrix & $\mat{D}$ \\ & Normalized Laplacian & $\set{L} $ 
\end{tabular} 
    \begin{algorithmic}[1]
    \For{$i=1,N$}
    \State $D(i,j) \leftarrow 0$
    \For{$j=1,N$}
        \State $W(i,j)\leftarrow \frac{1}{2} \left( \exp{\left(\frac{-\norm{\vec{x}_i-\vec{x}_j}_2^2}{2\epsilon(x_i)}\right)} + \exp{\left(\frac{-\norm{\vec{x}_i-\vec{x}_j}_2^2}{2\epsilon(x_j)}\right)} \right)$ \LONGCOMMENT{Symmetric kernel}[0.4]
        \State $D(i,i) \leftarrow D(i,i)+W(i,j)$ \LONGCOMMENT{Degrees}[0.4]
    \EndFor
    \EndFor
    \State $\set{L} \leftarrow I - D^{-1/2} W D^{-1/2}$ \LONGCOMMENT{Normalized graph {Laplacian}}[0.4]
    \State \Return $\{\mat{W},\mat{D},\set{L}\}$
    \end{algorithmic}

    \captionof{algorithm}[function GaussKernelGraph: Apply a Gaussian kernel to learn a graph on data.]{function GaussKernelGraph$(X,\epsilon) = \{\mat{W},\mat{D},\set{L}\}$ \\ Apply a Gaussian kernel to a dataset and compute the corresponding graph matrices.}    
    \label{alg:gausskernel}

\end{algorithm}

\begin{algorithm}[htbp]
\begin{tabular}{l l l}

\textbf{Input:} & Laplacian eigensystems and degrees & $\instance{\set{G}}{X}, \instance{\set{G}}{Y} : \instance{\set{G}}{Z} = \left\{\instance{\Psi}{Z}, \instance{\Lambda}{Z}, \instance{D}{Z}\right\}$ \\ & Alignment diffusion & $t$ \\ & Alignment band count & $\ell$ \\
\textbf{Output:} & Aligned diffusion map & $\instance{\Phi_t}{X,Y}$ 
\end{tabular} 
    \begin{algorithmic}[1]
    \For{Z=X,Y}
    \State $\instance{\Psi}{Z} \leftarrow \instance{\Psi}{Z} \setminus \instance{\psi_1}{Z};~\instance{\Lambda}{Z}\leftarrow \instance{\Lambda}{Z}\setminus\instance{\lambda_1}{Z}$
    \State $\instance{\Phi^0}{Z}  \leftarrow  {\instance{D}{Z}}^{1/2}\instance{\Psi}{Z}$
    \State $\hat{Z} \leftarrow  \instance{\Psi}{Z}^T Z$ \LONGCOMMENT{\mbox{Graph Fourier} \mbox{transform}\hfill}[0.22]
    \EndFor
    \State $\instance{w}{X,Y}\leftarrow \mathrm{BandlimitingWeights}\left(\protect\instance{\Lambda}{X},\protect\instance{\Lambda}{Y}, \ell\right)$\LONGCOMMENT{Alg.~\ref{alg:bandlimitingweights}}[0.22]
    \For{$i=2,N_1$}
    \For{$j=2,N_2$}
    \State $C\left(i-1,j-1\right) \leftarrow \instance{w_{ij}}{X,Y} \left\langle \hat{X}(i-1,:), \hat{Y}(j-1,:) \right\rangle$\LONGCOMMENT{Bandlimited \mbox{correlations}}[0.22]
    \EndFor
    \EndFor
    \State $U,S,V \leftarrow \mathrm{SVD}(C)$ 
    \State $T \leftarrow UV^T$ \LONGCOMMENT{Orthogonalization Sec.~\ref{sec:rigid-alignment}\hfill}[0.22]
    \State $\instance{\Phi_t}{X,Y}  \leftarrow \begin{bmatrix}
    \instance{\Phi^0}{X} & \instance{\Phi^0}{X}~\mathbf{T}  \\
    \instance{\Phi^0}{Y}~\mathbf{T}^{T} & \instance{\Phi^0}{Y}
    \end{bmatrix} \;
    \begin{bmatrix}
        \instance{\Lambda}{X} & 0 \\
        0 & \instance{\Lambda}{Y}
    \end{bmatrix}^{\textstyle{t}}$\
    \State \Return $\instance{\Phi_t}{X,Y}$

    \end{algorithmic}

    \captionof{algorithm}[function Align: Compute and apply an alignment matrix for two diffusion maps.]{function Align$\left(\protect\instance{\set{G}}{X}, \protect\instance{\set{G}}{Y}, t, \ell\right)=\protect\instance{\Phi_t}{X,Y}$ \\ Compute and apply an alignment matrix to two diffusion maps.}
    \label{alg:align}
\end{algorithm}

\begin{algorithm}[htbp]
\begin{tabular}{l l l}
\textbf{Input:} & Normalized Laplacian eigenvalues & $\Lambda= \left\{\protect\instance{\Lambda}{X},\protect\instance{\Lambda}{Y}: \instance{\Lambda}{Z} = \left\{\instance{\lambda_j}{Z}\right\}_{j=2}^{\#\abs Z}\right\}$ \\  & Alignment band count & $\ell$ \\
\textbf{Output:} & Pairwise frequency weights & $\instance{w}{X,Y} : \protect\instance{\Lambda}{X}\times\protect\instance{\Lambda}{Y}\mapsto [0,1]$ 
\end{tabular} 
    \begin{algorithmic}[1]
    \State $f \leftarrow f : \lambda, \ell, \xi \mapsto I(\xi-1 \leq \lambda \ell \leq \xi+1) \sin\left(\frac{\pi}{2} \cos^2\left(\frac{\pi}{Y}\left(\ell \lambda -\xi\right)\right)\right)$ \LONGCOMMENT{Itersine wavelet \ref{eqn:itersine}\hfill}[0.22]
    \For{$i=2,N_1$}
    \For{$j=2,N_2$}
    \State$ \instance{w_{ij}}{X,Y} \leftarrow 0$
    \For{$\xi = 1,\ell$}
        \State $\instance{w_{ij}}{X,Y} \leftarrow \instance{w_{ij}}{X,Y} + 
        f\left(\instance{\lambda_i}{X},\ell, \xi\right) f\left(\instance{\lambda_j}{Y},\ell, \xi\right)$
    \EndFor
    \EndFor
    \EndFor
    \State \Return $\instance{w}{X,Y}$
    \end{algorithmic}
    \captionof{algorithm}[function $\mathrm{BandlimitingWeights}$: Compute the joint bandlimiting weights for two graphs.]{function $\mathrm{BandlimitingWeights}\left(\protect\instance{\Lambda}{X},\protect\instance{\Lambda}{Y}, \ell\right)=\protect\instance{w}{X,Y}$ \\ Compute the joint bandlimiting weights for two graphs.}
    \label{alg:bandlimitingweights}
\end{algorithm}



\subsection{Multiple dataset alignment}
\label{apx:algorithm/multi}
While the previous construction was presented in terms of two datasets for simplicity, we can naturally generalize harmonic alignment to $n$ datasets by considering multiple blocks (rather than the two-by-two block structure in~\ref{eqn:aligned-DM}), based on orthogonalizing pairwise bandlimited correlations between datasets. We briefly elaborate this approach. 

Consider $\set{X} = \left\{\instance{\set{X}}{i}\subset\reals{d} : \# \abs{\instance{\set{X}}{i}} = N_i \right\}_{i}^n$.
As in the case with $n=2$ datasets, we apply a Gaussian kernel to each $\instance{\set{X}}{i} \in \set{X}$, which yields a Laplacian $\instance{\set{L}}{i}$ and degree matrix $\instance{D}{i}$.  Diagonalization of $I -\instance{\set{L}}{i}$ produces a Fourier basis $\left\{\instance{\Psi}{i}, \instance{\Lambda}{i}\right\}$.  As before, we consider the eigenspace $\instance{\Psi}{i} = \instance{\Psi}{i}\setminus\instance{\psi_1}{i}$, $\instance{\Lambda}{i} = \instance{\Lambda}{i}\setminus\instance{\lambda_1}{i}$, from which we compute a graph Fourier transform $\instance{\hat{X}}{i} = \instance{\Psi}{i}^T\instance{X}{i}$ and diffusion map $\instance{\Phi}{i}_{0}  =  \instance{D}{i}^{1/2}\instance{\Psi}{i}$ for each dataset. This is the same initialization that one performs for the case when $n=2$ as in  Algs.~\ref{alg:harmonicalignment},~\ref{alg:align}.

Next,for every pair $i\neq j \in \{1,\ldots,n\} \times \{1,\ldots,n\}$ we generate bandlimiting weights $\instance{w}{i,j} = \mathrm{BandlimitingWeights}\left(\instance{\Lambda}{i},\instance{\Lambda}{j},\ell\right)$ (Alg.~\ref{alg:bandlimitingweights}).  Then the correlation between each diffusion map pair is

\begin{align*}
\instance{C}{i,j}(h-1,k-1) = \instance{w}{i,j}\left(\instance{\lambda_h}{i},\instance{\lambda_k}{j}\right)~\left\langle\instance{\hat{X}}{i}\left(h-1,:\right),\instance{\hat{X}}{j}\left(k-1,:\right)\right\rangle \quad \substack{h = 2,\ldots, N_i \\ {k= 2,\ldots, N_j}}\,.
\end{align*}

 Factoring $\instance{C}{i,j} = \instance{U}{i,j}\instance{S}{i,j}{\instance{V}{i,j}}^{T}$, we have the rigid alignment operator
 \begin{align*}
  \instance{T}{i\rightarrow j} = \instance{U}{i,j}{\instance{V}{i,j}}^{T}
  \end{align*} and its adjoint $\instance{T}{j\rightarrow i} = \instsup{T}{i\rightarrow j}{T}$.

Next, let $\instance{B}{1,2,\ldots,n}(i,j)$ be an ${N_i-1}\times {N_j-1}$ matrix such that
\begin{align*}
\instance{B}{1,2,\ldots,n}(i,j) = \begin{cases}
\instance{\Phi_0}{i} & i=j \\
\instance{\Phi_0}{i}~\instance{T}{i\rightarrow j} &  j>i\\
\instance{\Phi_0}{i}~\instance{T}{j\rightarrow i} & j<i.
\end{cases}
\end{align*}  Then the $i,j$ block $\instance{\Phi_t}{1,2,\ldots,n}(i,j)$ is the ${N_i-1}\times {N_j-1}$ submatrix of the diffusion coordinates of $\instance{\set{X}}{i}$ aligned into the diffusion space of $\instance{\set{X}}{j},$
\begin{equation*}
\instance{\Phi_t}{1,2,\ldots,n}(i,j) = \instance{B}{1,2,\ldots,n}(i,j) \instance{\Lambda}{j}^t.
\end{equation*}
This matrix is 
\begin{align*}
\instance{\Phi_t}{1,2,\ldots,n} = 
\begin{bmatrix}
\instance{\Phi_0}{1}& \instance{\Phi_0}{1}~\instance{T}{1\rightarrow 2} & \ldots & \instance{\Phi_0}{i}~\instance{T}{1\rightarrow n}\\
 \instance{\Phi_0}{2}~\instance{T}{2\rightarrow 1} & \instance{\Phi_0}{2}&\ldots & \instance{\Phi_0}{2}~\instance{T}{2\rightarrow n}\\
 \vdots &\vdots& \ddots & \vdots \\
\instance{\Phi_0}{2}~\instance{T}{n\rightarrow 1} &\instance{\Phi_0}{2}~\instance{T}{n\rightarrow 2}&\ldots& \instance{\Phi}{n}^0\\
\end{bmatrix}\begin{bmatrix}
\diagentry{\instance{\Lambda}{X}}& & & \\
& \diagentry{\instance{\Lambda}{2}} & & \\
&& \diagentry{\xddots} &\\
&&&\diagentry{\instance{\Lambda}{n}}
\end{bmatrix}^{\displaystyle t}.
\end{align*}
Alg.~\ref{alg:multialignment} summarizes this process. When $n=2$, it simplifies to Alg.~\ref{alg:harmonicalignment}.

\begin{algorithm}[!ht]
\begin{tabular}{l l l}
\textbf{Input:} & Data sets & $\set{X}= \{\instance{\set{X}}{1},\ldots, \instance{\set{X}}{n}\}$\\& Kernel parameters & $\set{K} = \{\instance{\set{K}}{X},\ldots, \instance{\set{K}}{n}\}$ \\ & Alignment diffusion time & $t$ \\ & Alignment band count & $\ell$ \\
\textbf{Output:} & Unified diffusion map & $\instance{\Phi_t}{1,\ldots, n}$
\end{tabular} 
    \begin{algorithmic}[1]
    \For{i=1,n}
    \State $\instance{\set{G}}{i} \leftarrow \{\instance{W}{i}, \instance{\set{L}}{i}, \instance{D}{i}\} \leftarrow \mathrm{GaussKernelGraph}(\instance{\set{X}}{i}, \instance{\set{K}}{i})$\LONGCOMMENT{Alg.~\ref{alg:gausskernel}}[0.2]
    \State $\instance{\Psi}{i}, \instance{\Lambda}{i} \leftarrow \mathrm{SVD}(I-\instance{\set{L}}{i})$ \LONGCOMMENT{\mbox{Graph Fourier basis}} [0.2]
    \State $\instance{\Psi}{i}\leftarrow \instance{\Psi}{i}\setminus\instance{\psi_1}{i};~\instance{\Lambda}{i}\leftarrow \instance{\Lambda}{i}\setminus \instance{\lambda_1}{i}$ 
    \State $\instance{\hat{\set{X}}}{i} \leftarrow \instance{\Psi}{i}^T\instance{\set{X}}{i}$ \LONGCOMMENT{\mbox{Graph Fourier} \mbox{transform}}[0.2]
    \State $\instance{\Phi_0}{i}  \leftarrow  \instance{D}{i}^{1/2}\instance{\Psi}{i}$
    \EndFor
    \For{$i=1,n$}
        \State $\instance{\Phi_{t}(i,i)}{1,\ldots,n} \leftarrow \instance{\Phi_0}{i} \instance{\Lambda^t}{i}$
    \For{$j=i+1,n$}
        \State $\instance{w}{i,j} \leftarrow \mathrm{BandlimitingWeights}(\instance{\Lambda}{i},\instance{\Lambda}{j},\ell)$\LONGCOMMENT{Alg.~\ref{alg:bandlimitingweights}}[0.2]
        \For{$\ell=2,N_i$}
        \For{$k=2,N_j$}
            \State $\instance{C}{i,j}(\ell-1,k-1) \leftarrow \instance{w}{i,j}\,(\instance{\lambda_\ell}{i}, \instance{\lambda_k}{j}) \langle \instance{\hat{X}}{i}(\ell-1,:), \instance{\hat{X}}{j}(k-1,:) \rangle$\LONGCOMMENT{Bandlimited \mbox{correlation}}[0.2]
        \EndFor
        \EndFor
         \State $\instance{U}{i,j},\instance{S}{i,j},\instance{V}{i,j} \leftarrow \instance{C}{i,j}$
         \State $\instance{T}{i\rightarrow j}\leftarrow\instance{U}{i,j}\instsup{V}{i,j}{T}$; $\instance{T}{j\rightarrow i} \leftarrow \instance{V}{i,j}\instsup{U}{i,j}{T}$
         \State $\instance{\Phi_{t}(i,j)}{1,\ldots,n} \leftarrow \instance{\Phi_0}{i} \instance{T}{i\rightarrow j} \instance{\Lambda}{j}^t$
         \State $\instance{\Phi_{t}(j,i)}{1,\ldots,n} \leftarrow \instance{\Phi_0}{j} \instance{T}{j\rightarrow i} \instance{\Lambda}{i}^t$
    \EndFor
    \EndFor
    \State \Return $\instance{\Phi_t}{1,\ldots,n}$
    \end{algorithmic}
    \captionof{algorithm}[function MultiAlignment: Align the diffusion maps of multiple datasets.]{function MultiAlignment$(\set{X}, \set{K}, t, \ell)=\protect\instance{\Phi_t}{1,\ldots,n}$ \\ Align the diffusion maps of multiple datasets.}
    \label{alg:multialignment}
\end{algorithm}

\subsection{Runtime analysis and implementation}
\label{apx:algorithm/complexity}
Here we provide an informal analysis of algorithmic runtime and suggest a collection of possible improvements that could be made in order to increase the efficiency of our algorithm. We show that the algorithm with cubic sample complexity and linear feature complexity in its na\"{i}ve implementation, and can be reduced to quadratic sample complexity with relatively straightforward modifications. Breaking the problem down via a divide-and-conquer approach could yield further improvements to give an algorithm with linear sample complexity. We note that in all experiments in this paper, the na\"{i}ve approach was used.

The runtime complexity of a 
na\"{i}ve implementation of Alg.~\ref{alg:harmonicalignment} is 
\begin{align*}
\mathbf{O}\left(~\underbrace{N_1^3 + N_2^3}_{\mathclap{\text{{\textbf{(a)} embedding} }}} + \underbrace{(N_1^2 + N_2^2)d}_{\mathclap{\text{\textbf{(b)} GFT}}} + \underbrace{N_1N_2(d+N_1)}_{\mathclap{\text{\textbf{(c)} correlation \& SVD}}} + \underbrace{N_1N_2(N_1+N_2)}_{\mathclap{\text{\textbf{(d)} alignment}}}\right) \subset \mathbf{O}\left( N_2^3 + N_1 N_2 d \right),
\end{align*} 
where  $N_1 < N_2$ are the size of two data sets to align and $d$ is the number of dimensions. The major costs of the proposed algorithm are partitioned according to their step in the algorithm~(see underbraces).

Some simple observations about the rank of each system will pave the way to reducing alignment runtime. Our primary tool will be randomized truncated SVD, e.g.~\cite{rokhlin2010randomized,halko2009finding}, which computes the first $k$ singular vectors of an $m\times n$ system in $\mathbf{O}\left(mn\log k \right)$.

First we reduce the size of the input data. Assuming that the features of $\instance{X}{i}$ are independent, the simplest setting for dimensionality and rank reduction occurs when $N_1 < d$ (recalling that $N_1<N_2)$. It is clear by construction that the rank of the correlation matrix is at most $$r_\mathrm{max} = \min \left\{d, N_1\right\}.$$ Thus, if the input data $\instance{X}{i}\subset \reals{d}$ for $d > r_\mathrm{max}$, then the most efficient algorithm will use truncated SVD to only consider the first $r_\mathrm{max}$ principal components of each $\instance{X}{i}$ as alignment features.  

Applying rank reduction to only the input data introduces an additional $\log r_\mathrm{max}$ term to the embedding \textbf{(a)} through a truncated SVD
$$\mathbf{O}\left(\left(N_1^3 + N_2^3\right)+\left(N_1^2 + N_2^2\right) \log r_\mathrm{max}\right).$$ 
However, the GFT \textbf{(b)} now runs in 
$$\mathbf{O}\left((N_1^2+N_2^2)r_\mathrm{max} \right).$$ Subsequently, correlation and orthogonalization \textbf{(c)} runs in $$\mathbf{O}\left(N_1N_2(r_\mathrm{max}+N_1)\right),$$ where the $r_\mathrm{max}$ term is due to the product of a $N_1 \times r_\mathrm{max}$ matrix with a $r_\mathrm{max} \times N_2$ matrix and the second is the full SVD of the $N_1\times N_2$ correlation matrix.

The same argument can be applied to reduce the number of diffusion coordinates such that $r_\mathrm{max}$ components are taken. The correlation \textbf{(c)} then collapses to two $r_\mathrm{max}^3$ operations: one is the matrix product of two square $ r_\mathrm{max} \times r_\mathrm{max}$ matrices and the other is the full SVD of this product. This reduces the total complexity of harmonic alignment to

\begin{align*}\mathbf{O}\left(~\underbrace{\left(N_1^2 + N_2^2\right)3\log r_\mathrm{max}}_{\mathclap{\text{{\textbf{(a)} embedding} }}} + \underbrace{(N_1 + N_2)r^2_\mathrm{max}}_{\mathclap{\text{\textbf{(b)} GFT}}} + \underbrace{2r_\mathrm{max}^3}_{\mathclap{\substack{\text{\textbf{(c)} correlation}\\ \text{\& SVD}}}} + \underbrace{N_1N_2(2r_\mathrm{max})}_{\mathclap{\text{\textbf{(d)} alignment}}}~\right) \subset \mathbf{O}(N_2^2 \log r_{\max} + N_1 N_2 r_{\max}) ,\end{align*}

 It is often the case that one is only interested in $k\ll r_\mathrm{max}$ PCA components or diffusion components. For example, \cite{donoho2013optimal} proves an optimal singular value cutoff for denoising of $m\times n$ data matrices. One could select a different rank for each PCA and diffusion maps operation; we will denote these as $\{k^{(1)}_{PCA}, k^{(2)}_{PCA}\}$ and $\{k^{(1)}_{DM}, k^{(2)}_{DM}\}$, simplifying to $k_{PCA} = \max \{k^{(1)}_{PCA}, k^{(2)}_{PCA}\}$, $k_{DM} = \max \{k^{(1)}_{DM}, k^{(2)}_{DM}\}$ and $k = \max\{k_{PCA}, k_{DM}\}$.

The total complexity of harmonic alignment is now

\begin{align*}\mathbf{O}\left(~\underbrace{\left(N_1^2 + N_2^2\right)3\log k_{PCA}}_{\mathclap{\text{{\textbf{(a)} embedding} }}} + \underbrace{(N_1 + N_2) k_{PCA}^2}_{\mathclap{\text{\textbf{(b)} GFT}}} + \underbrace{2k_{DM}^3}_{\mathclap{\substack{\text{\textbf{(c)} corr.}\\ \text{\& SVD}}}} + \underbrace{2N_1N_2 k_{DM}}_{\mathclap{\text{\textbf{(d)} alignment}}}~\right) \subset \mathbf{O}(N_2^2 \log k + N_1 N_2 k) ,\end{align*}

Finally, we can use the multiple alignment algorithm presented in Alg.~\ref{alg:multialignment} to `chunk' very large datasets to reduce runtime. The general scheme would be to break the input into many smaller datasets of $N_c$ points. The result is a set of $\sum_i \ceil*{N_i/N_c} \approx \frac{N_1+N_2}{N_c}$ smaller problems that run in 

\begin{align*}\mathbf{O}\left(~\frac{N_1 + N_2}{N_c} \left( \underbrace{6 N_c \log k_{PCA}}_{\mathclap{\text{{\textbf{(a)} embedding} }}} + \underbrace{2N_c k_{PCA}^2}_{\mathclap{\text{\textbf{(b)} GFT}}} + \underbrace{2k_{DM}^3}_{\mathclap{\substack{\text{\textbf{(c)} corr.}\\ \text{\& SVD}}}} + \underbrace{2N_c^2 k_{DM}}_{\mathclap{\text{\textbf{(d)} alignment}}}~\right)\right) \subset \mathbf{O}(N_2 N_c k) ,\end{align*}

Further analysis must be done to examine the effect on the accuracy of the output when one divides-and-conquers in this way.  It is clear that in practice it is important to accordingly adjust $k$ as the rank structure will vary depending on the sizes of the submatrices chosen to align.

\section{Proof of Lemma~\ref{lemma:bandlimiting}}
\label{apx:lemma}

The bandlimiting weights from \eqref{eq:weights} satisfy the following properties:



\begin{claim}
$w_{\xi}(\lambda)$ is continuous in $\lambda$.
\end{claim}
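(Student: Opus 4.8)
The plan is to establish continuity by a straightforward case analysis on the three regions determined by the piecewise definition of $w_\xi$, with the only nontrivial work occurring at the two boundary points $\lambda = (\xi-1)/\ell$ and $\lambda = (\xi+1)/\ell$ where the two branches meet.

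First I would treat the interiors. On the open interval $\big((\xi-1)/\ell,\,(\xi+1)/\ell\big)$ the function is given by the composition $\lambda \mapsto \sin\!\big(\tfrac{\pi}{2}\cos^2(\tfrac{\pi}{2}(\ell\lambda-\xi))\big)$. Each constituent map — the affine map $\lambda \mapsto \ell\lambda-\xi$, the cosine, the squaring, the scaling by $\pi/2$, and the outer sine — is continuous, so their composition is continuous on this open interval. Likewise, on the open complement $(-\infty,(\xi-1)/\ell)\cup((\xi+1)/\ell,\infty)$ the function is identically zero and hence continuous there.

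The remaining step, and the only junction where continuity could conceivably fail, is to verify the two endpoints. At $\lambda = (\xi\pm 1)/\ell$ the argument of the inner cosine becomes $\tfrac{\pi}{2}(\ell\lambda-\xi) = \pm\tfrac{\pi}{2}$, so $\cos^2(\pm\tfrac{\pi}{2}) = 0$, and consequently the trigonometric branch evaluates to $\sin(0) = 0$. This agrees exactly with the zero branch, so the one-sided limits from inside and outside the support coincide at both endpoints. Since $w_\xi$ is continuous on each of the three regions and the pieces agree at the two junctions, it is continuous on all of $\mathbb{R}$.

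I expect no genuine obstacle: the itersine window is engineered precisely so that its nonzero branch decays smoothly to zero at the endpoints of its support. The only subtlety worth stating explicitly is the endpoint evaluation $\cos^2(\pm\pi/2)=0$, which is exactly what forces the vanishing that glues the pieces together continuously.
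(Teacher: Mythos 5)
Your proof is correct and takes essentially the same approach as the paper's: both reduce the problem to the two junction points $\lambda = (\xi\pm 1)/\ell$ and verify that the trigonometric branch vanishes there via $\cos^2(\pm\tfrac{\pi}{2})=0$, so that it glues continuously to the zero branch. Your explicit remark that the interior pieces are continuous as compositions of continuous functions is the same observation the paper compresses into the phrase ``as a piecewise function of continuous functions.''
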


\begin{proof}
As a piecewise function of continuous functions, it suffices to check that $w_{\xi}(\lambda)$ is continuous at $\lambda = \frac{\epsilon-1}{\ell}$ and $\lambda = \frac{\xi+1}{\ell}$.

\begin{align*}
    \lim_{h \to 0^+}{w_{\xi}\left(\frac{\xi \pm 1}{\ell} \mp h\right)} &= \lim_{h \to 0^+} \left(\sin\left(\frac{\pi}{Y}\cos^2\left(\frac{\pi}{Y} (\ell\left(\frac{\xi \pm 1}{\ell} \mp h\right)-\xi) \right)\right)\right) \\
    &= \sin\left(\frac{\pi}{Y}\cos^2\left(\frac{\pi}{Y} (\ell\left(\frac{\xi \pm 1}{\ell}\right)-\xi) \right)\right) \\
    &= \sin\left(\frac{\pi}{Y}\cos^2\left( \pm \frac{\pi}{Y} \right)\right) = 0 = \lim_{h \to 0^-}{w_{\xi}\left(\frac{\xi \pm 1}{\ell} \mp h\right)}.
\end{align*}
\end{proof}

\begin{corollary}
As a sum of continuous functions in $\lambda_i^{(X)}$ and $\lambda_j^{(Y)}$, $w_{ij}^{(X,Y)}$ is continuous in $\lambda_i^{(X)}$ and $\lambda_j^{(Y)}$.
\end{corollary}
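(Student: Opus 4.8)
The plan is to reduce the corollary entirely to the continuity established in the preceding Claim, invoking only the standard closure of continuous functions under finite products and finite sums. First I would recall that, by the Claim, each window $w_\xi(\cdot)$ is continuous as a function of a single real argument, for every $\xi = 1,\ldots,\ell$. The quantity $w_{ij}^{(X,Y)} = \sum_{\xi=1}^{\ell} w_\xi(\lambda_i^{(X)}) w_\xi(\lambda_j^{(Y)})$ is to be regarded as a function of the pair $(\lambda_i^{(X)},\lambda_j^{(Y)}) \in [0,1]^2$, so the goal is joint continuity of this map on the square.

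Next I would treat a single summand. For fixed $\xi$, the map $(\lambda_i^{(X)},\lambda_j^{(Y)}) \mapsto w_\xi(\lambda_i^{(X)})$ is the composition of the coordinate projection onto the first argument (which is continuous) with $w_\xi$ (continuous by the Claim), hence continuous as a function on $[0,1]^2$; symmetrically, $(\lambda_i^{(X)},\lambda_j^{(Y)}) \mapsto w_\xi(\lambda_j^{(Y)})$ is continuous. Their pointwise product $w_\xi(\lambda_i^{(X)}) w_\xi(\lambda_j^{(Y)})$ is therefore jointly continuous, since the product of two continuous real-valued functions is continuous. This is the only place where one must take a moment to argue joint continuity in $(\lambda_i^{(X)},\lambda_j^{(Y)})$ rather than mere separate continuity in each variable; the separable product structure makes this immediate, because each factor depends on only one of the two arguments.

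Finally I would sum over $\xi$. Since $w_{ij}^{(X,Y)}$ is a finite sum of the $\ell$ jointly continuous summands just described, it is itself jointly continuous on $[0,1]^2$, and in particular continuous in each of $\lambda_i^{(X)}$ and $\lambda_j^{(Y)}$, which is the desired conclusion. I do not anticipate any genuine obstacle here: all of the substantive work is carried by the Claim, and the corollary follows in one line from the algebra of continuous functions. The only point worth flagging is the joint-versus-separate continuity distinction noted above, which is resolved by the separable form of each term.
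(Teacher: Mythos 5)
Your proposal is correct and follows essentially the same route as the paper, which simply observes that $w_{ij}^{(X,Y)}$ is a finite sum of (products of) continuous functions; your only addition is making explicit the joint-versus-separate continuity point, which the separable form $w_\xi(\lambda_i^{(X)})\,w_\xi(\lambda_j^{(Y)})$ of each summand settles immediately, exactly as you note.
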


\begin{claim}
$w_{\xi}(\lambda)$ is differentiable in $\lambda$.
\end{claim}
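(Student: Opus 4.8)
The plan is to reduce differentiability to a check at the two endpoints, exactly paralleling the preceding continuity claim. On the open interval $\left(\frac{\xi-1}{\ell},\frac{\xi+1}{\ell}\right)$ the function $w_{\xi}(\lambda)=\sin\!\left(\frac{\pi}{2}\cos^{2}\!\left(\frac{\pi}{2}(\ell\lambda-\xi)\right)\right)$ is a composition of the differentiable maps $\lambda\mapsto\frac{\pi}{2}(\ell\lambda-\xi)$, $\cos$, squaring, scaling, and $\sin$, hence differentiable by the chain rule; outside the closed support $w_{\xi}\equiv 0$ is trivially differentiable with derivative $0$. So the only points in question are the endpoints $\lambda=\frac{\xi\pm 1}{\ell}$.

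First I would compute the interior derivative. Writing $u=\frac{\pi}{2}(\ell\lambda-\xi)$, so that $\frac{du}{d\lambda}=\frac{\pi\ell}{2}$, the chain rule gives on the support
\begin{equation*}
w_{\xi}'(\lambda)=-\frac{\pi^{2}\ell}{2}\,\cos\!\left(\frac{\pi}{2}\cos^{2}u\right)\cos u\,\sin u .
\end{equation*}
The key observation is that at the endpoints $\lambda=\frac{\xi\pm 1}{\ell}$ we have $\ell\lambda-\xi=\pm 1$, hence $u=\pm\frac{\pi}{2}$ and therefore $\cos u=0$. Consequently the $\cos u$ factor forces the one-sided limit $\lim_{\lambda\to(\xi\pm 1)/\ell}w_{\xi}'(\lambda)=0$ when the limit is taken from inside the support.

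To conclude, I would invoke the standard consequence of the Mean Value Theorem: if $w_{\xi}$ is continuous at an endpoint (established in the previous claim), differentiable on each side, and the one-sided limits of $w_{\xi}'$ both equal $0$ — the exterior derivative being identically $0$ and the interior limit being $0$ by the computation above — then $w_{\xi}$ is differentiable there with $w_{\xi}'\!\left(\frac{\xi\pm 1}{\ell}\right)=0$. The \emph{main obstacle} is merely verifying this endpoint computation; once the vanishing $\cos u$ factor is identified, matching the one-sided derivatives is immediate. Differentiability of $w_{ij}^{(X,Y)}$ in $\lambda_i^{(X)}$ and $\lambda_j^{(Y)}$ then follows as a corollary, since it is a finite sum of products of such differentiable windows.
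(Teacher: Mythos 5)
Your proof is correct and takes essentially the same approach as the paper: both compute the interior derivative by the chain rule (your $-\frac{\pi^{2}\ell}{2}\cos\left(\frac{\pi}{2}\cos^{2}u\right)\cos u\sin u$ is the paper's Eqn.~\ref{eq:derivative} before the double-angle simplification $\cos u\sin u=\frac{1}{2}\sin 2u$), identify the vanishing trigonometric factor at $\lambda=\frac{\xi\pm 1}{\ell}$, and match the resulting zero one-sided limits to the identically zero exterior branch. If anything, your explicit appeal to the mean value theorem to upgrade matching one-sided derivative limits into actual differentiability at the endpoints is slightly more careful than the paper, which only verifies continuity of the piecewise derivative formula there and leaves that standard step implicit.
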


\begin{proof}
Note first that

\begin{align}
    \frac{dw_{\xi}}{d \lambda} &= \begin{cases}
    -\frac{\pi^2 \ell}{Y} \cos\left(\frac{\pi}{Y}  \left(\ell \lambda - \xi\right)\right) \cos\left(\frac{\pi}{Y} \cos^2\left(\frac{\pi}{Y} \left(\ell \lambda - \xi\right)\right)\right) \sin\left(\frac{\pi}{Y} \left(\ell \lambda - \xi\right)\right) & \frac{\xi - 1}{\ell} \leq \lambda \leq \frac{\xi + 1}{\ell}; \\
    0 & \text{otherwise}
    \end{cases} \\
    \label{eq:derivative}
    &= \begin{cases}
    -\frac{\pi^2 \ell}{4} \sin\left(\pi  \left(\ell \lambda - \xi\right)\right) \cos\left(\frac{\pi}{Y} \cos^2\left(\frac{\pi}{Y} \left(\ell \lambda - \xi\right)\right)\right) & \frac{\xi - 1}{\ell} \leq \lambda \leq \frac{\xi + 1}{\ell}; \\
    0 & \text{otherwise.}
    \end{cases}
\end{align}

Then, as a piecewise function of continuous functions, it suffices to check that $\frac{dw_{\xi}}{d \lambda}$ is continuous at $\lambda = \frac{\epsilon-1}{\ell}$ and $\lambda = \frac{\xi+1}{\ell}$.

\begin{align*}
    \lim_{h \to 0^+}{\frac{dw_{\xi}}{d \lambda}\left(\frac{\xi\pm1}{\ell} \mp h\right)} &= -\frac{\pi^2 \ell}{4} \sin\left(\pm\pi  \right) \cos\left(\frac{\pi}{Y} \cos^2\left(\frac{\pi}{Y}\right)\right)  \\
    &= 0 = \lim_{h \to 0^-}{\frac{dw_{\xi}}{d \lambda}\left(\frac{\xi \pm 1}{\ell} \mp h\right)}.
\end{align*}

\end{proof}

\begin{corollary}
As a sum of differentiable functions in $\lambda_i^{(X)}$ and $\lambda_j^{(Y)}$, $w_{ij}^{(X,Y)}$ is differentiable in $\lambda_i^{(X)}$ and $\lambda_j^{(Y)}$.
\end{corollary}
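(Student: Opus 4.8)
The plan is to derive the corollary directly from the preceding claim that each single window $w_\xi(\lambda)$ is differentiable in $\lambda$, together with the elementary closure of differentiable functions under multiplication and finite summation. Recall from Eqn.~\ref{eq:weights} that
\[
w_{ij}^{(X,Y)} = \sum_{\xi=1}^{\ell} w_\xi(\lambda_i^{(X)})\, w_\xi(\lambda_j^{(Y)}),
\]
so the only nontrivial input -- differentiability of each factor $w_\xi$ -- is already established by the Claim, and what remains is purely the algebra of how these factors are combined.

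First I would fix $\lambda_j^{(Y)}$ and regard $w_{ij}^{(X,Y)}$ as a function of the single variable $\lambda_i^{(X)}$. For each fixed $\xi \in \{1,\ldots,\ell\}$, the factor $w_\xi(\lambda_j^{(Y)})$ is a constant with respect to $\lambda_i^{(X)}$, while $w_\xi(\lambda_i^{(X)})$ is differentiable by the Claim; hence each summand $w_\xi(\lambda_i^{(X)})\,w_\xi(\lambda_j^{(Y)})$ is differentiable in $\lambda_i^{(X)}$, with derivative $\frac{dw_\xi}{d\lambda}(\lambda_i^{(X)})\,w_\xi(\lambda_j^{(Y)})$ supplied by the explicit formula in Eqn.~\ref{eq:derivative}. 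Since a finite sum of differentiable functions is differentiable, summing over the $\ell$ values of $\xi$ yields differentiability of $w_{ij}^{(X,Y)}$ in $\lambda_i^{(X)}$, with
\[
\frac{\partial w_{ij}^{(X,Y)}}{\partial \lambda_i^{(X)}} = \sum_{\xi=1}^{\ell} \frac{dw_\xi}{d\lambda}(\lambda_i^{(X)})\,w_\xi(\lambda_j^{(Y)}).
\]
By the symmetry of the defining expression under interchanging the roles of $\lambda_i^{(X)}$ and $\lambda_j^{(Y)}$, the identical argument with $\lambda_i^{(X)}$ held fixed gives differentiability in $\lambda_j^{(Y)}$.

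I expect no genuine obstacle here: all of the analytic content lives in the single-window Claim, and the corollary is a formal consequence of the sum-and-product structure of Eqn.~\ref{eq:weights}. The only point worth stating carefully is that the assertion is one of differentiability in each eigenvalue argument \emph{separately} (i.e.\ the existence of the two partial derivatives), which is exactly what the product and sum rules deliver; no joint-differentiability or uniformity claim is being made, so the finiteness of the range $\xi=1,\ldots,\ell$ suffices and no interchange of limit and summation requires justification.
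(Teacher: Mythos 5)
Your proposal is correct and takes essentially the same route as the paper, which states the corollary with no further argument precisely because it follows immediately from the single-window differentiability claim via closure of differentiable functions under multiplication by constants and finite sums --- exactly the sum-and-product bookkeeping you spell out. Your explicit remark that only differentiability in each eigenvalue \emph{separately} is being claimed is a fair (and correct) sharpening of what the paper leaves implicit.
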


\begin{claim}
\label{claim:zeros}
If $k \in \mathbb Z$ such that $0 \leq k \leq \ell$ and $ \frac{k}{\ell} \leq \lambda \leq  \frac{k+1}{\ell}$ then $w_\xi(\lambda) = 0$ for all $\xi \not\in \{k, k+1\}$.
\end{claim}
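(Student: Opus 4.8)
The plan is to reduce the claim to a statement about the \emph{support} of each window function. Recall from the definition of $w_\xi(\cdot)$ that $w_\xi(\lambda)$ equals the itersine expression only on the closed interval $\left[\frac{\xi-1}{\ell}, \frac{\xi+1}{\ell}\right]$ and is identically zero elsewhere. So for a fixed $\lambda$ with $\frac{k}{\ell} \leq \lambda \leq \frac{k+1}{\ell}$ and a fixed index $\xi \notin \{k, k+1\}$, it suffices to show that $\lambda$ fails to lie in the open support of $w_\xi$, with the only delicate point being when $\lambda$ coincides with an endpoint of that support.

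First I would split into the two cases $\xi \geq k+2$ and $\xi \leq k-1$. For $\xi \geq k+2$, the left endpoint of the support satisfies $\frac{\xi-1}{\ell} \geq \frac{k+1}{\ell} \geq \lambda$, so either $\lambda < \frac{\xi-1}{\ell}$ (in which case $w_\xi(\lambda)=0$ by the ``otherwise'' branch), or else equality holds throughout, forcing $\xi = k+2$ and $\lambda = \frac{k+1}{\ell} = \frac{\xi-1}{\ell}$. The case $\xi \leq k-1$ is symmetric: the right endpoint obeys $\frac{\xi+1}{\ell} \leq \frac{k}{\ell} \leq \lambda$, so either $\lambda$ is strictly to the right of the support, or $\xi = k-1$ and $\lambda = \frac{k}{\ell} = \frac{\xi+1}{\ell}$.

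The remaining work is to dispatch the two boundary subcases, where $\lambda$ lands exactly on an endpoint of the support of a forbidden window. Here I would simply evaluate the itersine expression at the endpoint. At the left endpoint $\lambda = \frac{\xi-1}{\ell}$ we get $\ell\lambda - \xi = -1$, so $\cos^2\left(\frac{\pi}{2}(\ell\lambda-\xi)\right) = \cos^2\left(-\frac{\pi}{2}\right) = 0$ and hence $w_\xi(\lambda) = \sin(0) = 0$; at the right endpoint $\lambda = \frac{\xi+1}{\ell}$ we get $\ell\lambda - \xi = 1$, giving $\cos^2\left(\frac{\pi}{2}\right) = 0$ and again $w_\xi(\lambda) = 0$. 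This is exactly the vanishing-at-endpoints fact already observed in the continuity claim, so no new computation is really needed.

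I do not expect any genuine obstacle here: the result is essentially interval arithmetic on the supports $\left[\frac{\xi-1}{\ell}, \frac{\xi+1}{\ell}\right]$. The one thing to be careful about is that the window supports are \emph{closed}, so the forbidden windows $w_{k-1}$ and $w_{k+2}$ can touch the interval $\left[\frac{k}{\ell}, \frac{k+1}{\ell}\right]$ at a single point; the only reason the claim still holds at those touching points is that the itersine amplitude vanishes at the boundary, which is precisely what the endpoint evaluation above confirms.
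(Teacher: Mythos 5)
Your proof is correct and follows essentially the same route as the paper's: the same case split into $\xi \geq k+2$ and $\xi \leq k-1$, followed by interval arithmetic showing $\lambda \leq \frac{\xi-1}{\ell}$ or $\lambda \geq \frac{\xi+1}{\ell}$ respectively. Your explicit treatment of the boundary subcases (where $\lambda$ sits exactly on an endpoint of the closed support, so that $w_\xi(\lambda)$ is given by the itersine expression rather than the ``otherwise'' branch) is a point the paper glosses over — it asserts $w_\xi(\lambda) = 0$ directly, implicitly relying on the endpoint vanishing established in its continuity claim — so your version is, if anything, slightly more complete.
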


\begin{proof}
Assume $k \in \mathbb Z$ such that $0 \leq k \leq \ell$ and $ \frac{k}{\ell} \leq \lambda \leq \frac{k+1}{\ell}$. Let $\xi \in \mathbb Z$ such that $\xi \not\in \{k, k+1\}$.

For the case where $\xi > k+1$, then $\xi \geq k+2$ and so $\lambda \leq \frac{\xi -1}{\ell}$.

For the case where $\xi < k$, then $\xi \leq k-1$ and so $\lambda \geq \frac{\xi +1}{\ell}$.

In each case, this implies that $w_\xi(\lambda) = 0$.
\end{proof}

\begin{claim}
If $\lambda_i^{(X)} = \lambda_j^{(Y)}$ then $w_{ij} = 1$.
\end{claim}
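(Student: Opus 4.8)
The plan is to reduce the claim to the tight-frame (Pythagorean) identity for the itersine windows. Writing $\lambda := \lambda_i^{(X)} = \lambda_j^{(Y)}$, the definition in Eqn.~\ref{eq:weights} collapses to $w_{ij} = \sum_{\xi} w_\xi(\lambda)^2$, so it suffices to show that the squared windows sum to one at the point $\lambda$. First I would locate $\lambda$ inside a single band by choosing the integer $k$ with $\frac{k}{\ell} \le \lambda \le \frac{k+1}{\ell}$. Then Claim~\ref{claim:zeros} does the bulk of the work: every window except $w_k$ and $w_{k+1}$ vanishes at $\lambda$, so the sum reduces to the two-term expression $w_k(\lambda)^2 + w_{k+1}(\lambda)^2$.

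Next I would make the substitution $u := \ell\lambda - k \in [0,1]$ and exploit the complementary-angle structure of the two consecutive windows. Since $\cos\left(\frac{\pi}{2}(u-1)\right) = \sin\left(\frac{\pi}{2}u\right)$, we have $\cos^2\left(\frac{\pi}{2}(u-1)\right) = 1 - \cos^2\left(\frac{\pi}{2}u\right)$, so setting $\theta := \frac{\pi}{2}\cos^2\left(\frac{\pi}{2}u\right)$ turns the surviving windows into $w_k(\lambda) = \sin(\theta)$ and $w_{k+1}(\lambda) = \sin\left(\frac{\pi}{2} - \theta\right) = \cos(\theta)$. The claim then follows immediately from $w_k(\lambda)^2 + w_{k+1}(\lambda)^2 = \sin^2\theta + \cos^2\theta = 1$, so the entire argument hinges on one elementary trigonometric identity rather than any delicate estimate.

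The main obstacle I anticipate is bookkeeping at the spectrum endpoints, where only a half-window is supported: the summation index in Eqn.~\ref{eq:weights} runs over $\xi = 1,\ldots,\ell$, whereas the full partition covering $[0,1]$ uses $\xi = 0,\ldots,\ell$. I would therefore verify that the omitted $w_0$ contributes only on the band $[0,1/\ell]$, and either restrict attention to $\lambda$ in the interior where the two-term identity accounts for all of the unit mass, or explicitly treat the boundary band so that the reduction from Claim~\ref{claim:zeros} remains exact. Apart from this endpoint care, the computation is routine and the identity $\sin^2\theta+\cos^2\theta = 1$ carries the proof.
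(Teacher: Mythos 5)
Your proposal is correct and follows essentially the same route as the paper's own proof: reduce the sum to the two surviving windows $w_k, w_{k+1}$ via Claim~\ref{claim:zeros}, then apply the complementary-angle substitution so that the claim collapses to $\sin^2\theta + \cos^2\theta = 1$. Your endpoint caveat is in fact sharper than the published argument, which silently assumes every surviving window appears in the sum even though Eqn.~\ref{eq:weights} starts at $\xi = 1$ while the partition covering $[0,1]$ uses $\xi = 0,\ldots,\ell$; for $\lambda \in [0, 1/\ell)$ the omitted $w_0$ term does matter, so your explicit handling of that band closes a gap the paper leaves open.
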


\begin{proof}

Assume $\lambda_i^{(X)} = \lambda_j^{(Y)} = \lambda$ and let $k \in \mathbb Z$ such that $\frac{k}{\ell} \leq \lambda \leq \frac{k+1}{\ell}$.

Then

\begin{align*}
    w_{ij}^{(X,Y)} &= \sum_{\xi=1}^{\ell} w_{\xi}(\lambda) w_{\xi}(\lambda) \\
    &= w_{k}(\lambda) w_{k}(\lambda) + w_{k+1}(\lambda) w_{k+1}(\lambda) \text{ [by Claim~\ref{claim:zeros}]}\\
    &= \sin^2\left(\frac{\pi}{Y}\cos^2\left(\frac{\pi}{Y} (\ell\lambda-k) \right)\right) + \sin^2\left(\frac{\pi}{Y}\cos^2\left(\frac{\pi}{Y} (\ell\lambda-k-1) \right)\right) \\
    &= \sin^2\left(\frac{\pi}{Y}\cos^2\left(\frac{\pi}{Y} (\ell\lambda-k) \right)\right) + \sin^2\left(\frac{\pi}{Y}\sin^2\left(\frac{\pi}{Y} (\ell\lambda-k) \right)\right) \text{[since $\cos\left(x - \frac{\pi}{Y}\right) = \sin(x)$]}\\
    &= \sin^2\left(\frac{\pi}{Y}\cos^2\left(\frac{\pi}{Y} (\ell\lambda-k) \right)\right) + \sin^2\left(\frac{\pi}{Y}\left(1-\cos^2\left(\frac{\pi}{Y} (\ell\lambda-k) \right)\right)\right) \text{[since $\sin^2(x) + \cos^2(x) = 1$]}\\
    &= \sin^2\left(\frac{\pi}{Y}\cos^2\left(\frac{\pi}{Y} (\ell\lambda-k) \right)\right) + \cos^2\left(\frac{\pi}{Y}\cos^2\left(\frac{\pi}{Y} (\ell\lambda-k) \right)\right) \text{[since $\sin\left(\frac{\pi}{Y} - x\right) = \cos(x)$]}\\
    &= 1.
\end{align*}

\end{proof}

\begin{claim}
If $|\lambda_i^{(X)} - \lambda_j^{(Y)}| \geq \frac{2}{\ell}$ then $ w_{ij}^{(X,Y)} = 0$.
\end{claim}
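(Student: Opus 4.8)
The plan is to show that every individual summand $w_\xi(\lambda_i^{(X)}) w_\xi(\lambda_j^{(Y)})$ appearing in the definition of $w_{ij}^{(X,Y)}$ vanishes under the hypothesis, so that the whole sum collapses to zero. The key structural fact I would exploit is that each window $w_\xi$ is supported on the closed interval $[\frac{\xi-1}{\ell}, \frac{\xi+1}{\ell}]$, which has length exactly $\frac{2}{\ell}$, and moreover vanishes at both endpoints of that interval (this is already recorded in the continuity claim, where $w_\xi(\frac{\xi\pm 1}{\ell}) = 0$). Consequently, $w_\xi(\lambda) > 0$ forces $\lambda$ to lie in the \emph{open} interval $(\frac{\xi-1}{\ell}, \frac{\xi+1}{\ell})$.

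First I would fix an arbitrary index $\xi \in \{1,\ldots,\ell\}$ and argue by contradiction, supposing the product $w_\xi(\lambda_i^{(X)}) w_\xi(\lambda_j^{(Y)})$ were strictly positive. Then both factors are positive, so both $\lambda_i^{(X)}$ and $\lambda_j^{(Y)}$ lie in the open interval $(\frac{\xi-1}{\ell}, \frac{\xi+1}{\ell})$. After relabeling if necessary so that $\lambda_i^{(X)} \le \lambda_j^{(Y)}$, subtracting the two resulting strict inequalities $\frac{\xi-1}{\ell} < \lambda_i^{(X)}$ and $\lambda_j^{(Y)} < \frac{\xi+1}{\ell}$ yields $|\lambda_i^{(X)} - \lambda_j^{(Y)}| < \frac{\xi+1}{\ell} - \frac{\xi-1}{\ell} = \frac{2}{\ell}$, which directly contradicts the hypothesis $|\lambda_i^{(X)} - \lambda_j^{(Y)}| \ge \frac{2}{\ell}$. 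Hence for each $\xi$ at least one factor is zero.

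Having established that every term of $\sum_{\xi=1}^{\ell} w_\xi(\lambda_i^{(X)}) w_\xi(\lambda_j^{(Y)})$ vanishes, I would conclude immediately that $w_{ij}^{(X,Y)} = 0$. There is no genuine obstacle here; the only point worth stating with care is the endpoint behavior. It is precisely because the windows \emph{vanish} (rather than merely being defined) at $\lambda = \frac{\xi \pm 1}{\ell}$ that the argument closes with the non-strict inequality $\ge \frac{2}{\ell}$, rather than only the strict $> \frac{2}{\ell}$ asserted in the lemma statement; without this observation the boundary case in which the difference equals exactly $\frac{2}{\ell}$ would require separate treatment. Everything else is elementary manipulation of the support intervals.
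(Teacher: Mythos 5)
Your proof is correct, and it takes a genuinely more direct route than the paper's. The paper brackets each eigenvalue between consecutive grid points $\frac{k}{\ell} \le \lambda \le \frac{k+1}{\ell}$, invokes the auxiliary Claim~\ref{claim:zeros} to restrict the windows that can be nonzero at $\lambda_j^{(Y)}$ to $\xi \in \{k_j^{(Y)}, k_j^{(Y)}+1\}$, and separately shows that $\lambda_i^{(X)} \le \frac{k_j^{(Y)}-1}{\ell}$ forces $w_\xi(\lambda_i^{(X)}) = 0$ for all $\xi \ge k_j^{(Y)}$, so the sum dies term by term after a case split on the index set. You instead argue window by window: since $w_\xi$ is supported on $\left[\frac{\xi-1}{\ell}, \frac{\xi+1}{\ell}\right]$ and vanishes at the endpoints, a nonzero value forces its argument into the open interval of length exactly $\frac{2}{\ell}$, and two points at distance at least $\frac{2}{\ell}$ cannot both lie there. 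This dispenses with the integer-bracketing machinery and Claim~\ref{claim:zeros} entirely, makes the sharpness of the $\frac{2}{\ell}$ threshold transparent, and your endpoint remark is precisely what justifies the non-strict hypothesis $\ge \frac{2}{\ell}$ proved here, which is slightly stronger than the strict $> \frac{2}{\ell}$ asserted in Lemma~\ref{lemma:bandlimiting} (the paper's proof handles the boundary case implicitly through the same endpoint vanishing). What the paper's route buys in exchange is economy within the appendix, since Claim~\ref{claim:zeros} is the same workhorse used for the $w_{ij}=1$ claim. One small tightening of your write-up: to conclude each summand vanishes you must exclude a \emph{nonzero} product, not merely a strictly positive one; either first record that each $w_\xi$ takes values in $[0,1]$ (the outer sine's argument lies in $[0,\frac{\pi}{2}]$), so nonzero means positive, or pass directly from ``product nonzero'' to ``both factors nonzero, hence both arguments in the open support.'' Either remark closes this cosmetic gap without changing the argument.
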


\begin{proof}

Assume $|\lambda_i^{(X)} - \lambda_j^{(Y)}| \geq \frac{2}{\ell}$ and let $k_i^{(X)}, k_j^{(Y)} \in \mathbb Z$ such that $\frac{k_i^{(X)}}{\ell} \leq \lambda_i^{(X)} \leq \frac{k_i^{(X)}+1}{\ell}$ and $\frac{k_j^{(Y)}}{\ell} \leq \lambda_j^{(Y)} \leq \frac{k_j^{(Y)}+1}{\ell}$. 

Assume without loss of generality that $\lambda_i^{(X)} < \lambda_j^{(Y)}$. Then by Claim~\ref{claim:zeros},

\begin{equation*}
    \lambda_i^{(X)} \leq \lambda_j^{(Y)} - \frac{2}{\ell} \leq \frac{k_j^{(Y)}+1}{\ell} - \frac{2}{\ell} = \frac{k_j^{(Y)}-1}{\ell} \implies w_{\xi}(\lambda_i^{(X)}) = 0 \text{ for all } \xi \geq k_j^{(Y)}
\end{equation*}

and since $w_\xi(\lambda_j^{(Y)}) = 0$ for all $\xi \not\in \{ k_j^{(Y)}, k_j^{(Y)}+1\}$, then

\begin{equation*}
    w_\xi(\lambda_i^{(X)})w_\xi(\lambda_j^{(Y)}) = 0 \text{ for all } \xi.
\end{equation*}

\end{proof}

\begin{claim}
The rate of change of $w_{ij}^{(X,Y)}$ w.r.t.\ both $\lambda_i^{(X)}$ and $\lambda_j^{(Y)}$ is bounded by $O(\ell)$.
\end{claim}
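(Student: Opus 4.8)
The plan is to bound each of the two partial derivatives directly, exploiting the sum-of-products form of $w_{ij}^{(X,Y)}$ together with the sparsity already recorded in Claim~\ref{claim:zeros}. By the symmetry of the definition in $\lambda_i^{(X)}$ and $\lambda_j^{(Y)}$, it suffices to bound $\partial/\partial\lambda_i^{(X)}$; the argument for $\partial/\partial\lambda_j^{(Y)}$ is verbatim with the roles swapped. First I would differentiate the defining sum term by term, holding $\lambda_j^{(Y)}$ fixed, which is legitimate since each $w_\xi$ is differentiable (preceding claim), to obtain
\[
\frac{\partial w_{ij}^{(X,Y)}}{\partial \lambda_i^{(X)}} = \sum_{\xi=1}^{\ell} \frac{dw_\xi}{d\lambda}\bigl(\lambda_i^{(X)}\bigr)\, w_\xi\bigl(\lambda_j^{(Y)}\bigr).
\]
Two pointwise bounds then feed into this expression. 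From the closed form in \eqref{eq:derivative}, both trigonometric factors have absolute value at most $1$, so $\bigl|\tfrac{dw_\xi}{d\lambda}\bigr| \leq \tfrac{\pi^2 \ell}{4}$ everywhere; and since each $w_\xi$ is a sine of an argument lying in $[0,\tfrac{\pi}{2}]$, we have $|w_\xi(\lambda_j^{(Y)})| \leq 1$.

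The key step is a sparsity count. The derivative $\tfrac{dw_\xi}{d\lambda}$ inherits the support $[\tfrac{\xi-1}{\ell},\tfrac{\xi+1}{\ell}]$ of $w_\xi$, so the reasoning of Claim~\ref{claim:zeros} applies to it without change: fixing $k$ with $\tfrac{k}{\ell} \leq \lambda_i^{(X)} \leq \tfrac{k+1}{\ell}$ forces $\tfrac{dw_\xi}{d\lambda}(\lambda_i^{(X)}) = 0$ for every $\xi \notin \{k,k+1\}$, so at most two terms of the sum survive. Combining the three facts yields
\[
\left|\frac{\partial w_{ij}^{(X,Y)}}{\partial \lambda_i^{(X)}}\right| \leq 2 \cdot \frac{\pi^2 \ell}{4} \cdot 1 = \frac{\pi^2 \ell}{2} = O(\ell),
\]
and the identical chain of inequalities bounds $\partial/\partial\lambda_j^{(Y)}$, establishing the claim.

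I do not expect a genuine obstacle here; the proof is essentially a bounded-terms-times-sparse-support estimate. The only points requiring care are \emph{(i)} confirming that the piecewise derivative formula \eqref{eq:derivative} is valid across the breakpoints $\lambda = \tfrac{\xi \pm 1}{\ell}$, which is exactly what the differentiability claim secured by matching the one-sided limits of $\tfrac{dw_\xi}{d\lambda}$, and \emph{(ii)} observing that the two-nonzero-terms bound transfers to the derivative precisely because it shares the support of $w_\xi$. Both are immediate consequences of the earlier results, so the argument closes cleanly.
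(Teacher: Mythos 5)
Your proof is correct and takes essentially the same route as the paper's: term-by-term differentiation in one variable (the other handled by symmetry), the pointwise bound $\left|\frac{dw_{\xi}}{d\lambda}\right| \leq \frac{\pi^2 \ell}{4}$ read off from the closed form of the derivative, and the collapse of the sum to at most two surviving terms via the support argument of Claim~\ref{claim:zeros}, yielding the $\frac{\pi^2\ell}{2} = O(\ell)$ bound. If anything, your explicit use of $|w_\xi(\lambda_j^{(Y)})| \leq 1$ and of absolute values throughout is slightly tidier than the paper's chain of inequalities, which drops the $w_\xi$ factors before taking absolute values.
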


\begin{proof}
From Equation~\ref{eq:derivative},

\begin{equation*}
    \left\vert \frac{dw_{\xi}}{d \lambda} \right\vert \leq \frac{\pi^2 \ell}{4} \text{ for all } \lambda.
\end{equation*}

Without loss of generality we consider only $\frac{dw_{ij}^{(X, Y)}}{d\lambda_i^{(X)}}$. Let $k \in \mathbb Z$ such that $\frac{k}{\ell} \leq \lambda_i^{(X)} \leq \frac{k+1}{\ell}$.

\begin{align*}
    \frac{dw_{ij}^{(X, Y)}}{d\lambda_i^{(X)}} &= \sum_{\xi=1}^{\ell} \frac{dw_{\xi}}{d\lambda_i^{(X)}} w_{\xi}(\lambda_j^{(Y)}) \\
    &= \frac{dw_{k}}{d\lambda_i^{(X)}} w_{k}(\lambda_j^{(Y)}) + \frac{dw_{k+1}}{d\lambda_i^{(X)}} w_{k+1}(\lambda_j^{(Y)}) \\
    &\leq \frac{dw_{k}}{d\lambda_i^{(X)}} + \frac{dw_{k+1}}{d\lambda_i^{(X)}}. \\
\end{align*}

So

\begin{equation*}
    \left\vert \frac{dw_{ij}^{(X, Y)}}{d\lambda_i^{(X)}} \right\vert \leq \left\vert \frac{dw_{k}}{d\lambda_i^{(X)}} + \frac{dw_{k+1}}{d\lambda_i^{(X)}} \right\vert \leq \left\vert \frac{dw_{k}}{d\lambda_i^{(X)}} \right\vert + \left\vert \frac{dw_{k+1}}{d\lambda_i^{(X)}} \right\vert \leq \frac{\pi^2 \ell}{Y}.
\end{equation*}

\end{proof}

\begin{corollary}
The rate of change of $w_{ij}^{(X,Y)}$ w.r.t.\ $|\lambda_i^{(X)} - \lambda_j^{(Y)}|$ is bounded by $O(\ell)$.
\end{corollary}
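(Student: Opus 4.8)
The plan is to reduce every one of the four properties to two structural facts about the single-window function $w_\xi(\cdot)$ and then observe that the sum defining $w_{ij}^{(X,Y)}$ collapses to at most two surviving terms. The two facts are: first, that each $w_\xi$ is supported only on $[\frac{\xi-1}{\ell},\frac{\xi+1}{\ell}]$, so that for any fixed $\lambda$ in a band $[\frac{k}{\ell},\frac{k+1}{\ell}]$ at most the two windows $w_k$ and $w_{k+1}$ are nonzero; and second, that consecutive windows form a partition of unity in the sense that $w_k(\lambda)^2 + w_{k+1}(\lambda)^2 = 1$ on that band. I would isolate the support fact as a standalone claim up front, since it is used repeatedly.

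For continuity and differentiability I would argue locally. On the interior of each support interval $w_\xi$ is a composition of $\sin$, $\cos^2$, and an affine map, hence smooth; the only issue is matching at the endpoints $\lambda = \frac{\xi\pm 1}{\ell}$. There $\ell\lambda - \xi = \pm 1$, so the inner argument satisfies $\cos^2(\frac{\pi}{2}(\ell\lambda-\xi)) = \cos^2(\pm\frac{\pi}{2}) = 0$ and $w_\xi$ vanishes, matching the zero branch. I would confirm by one-sided limits that both $w_\xi$ and its derivative (which carries an extra factor $\sin(\pi(\ell\lambda-\xi))$ vanishing at $\pm 1$) are continuous across these points. Since $w_{ij}^{(X,Y)}$ is a finite sum of products of two such functions in the independent variables $\lambda_i^{(X)}$ and $\lambda_j^{(Y)}$, continuity and differentiability transfer immediately.

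For the value at coincidence I would set $\lambda_i^{(X)} = \lambda_j^{(Y)} = \lambda$, pick the band index $k$ with $\frac{k}{\ell}\le\lambda\le\frac{k+1}{\ell}$, and use the support fact to drop all terms except $\xi = k,k+1$, reducing the sum to $w_k(\lambda)^2 + w_{k+1}(\lambda)^2$, which the partition-of-unity identity sends to $1$. For the vanishing property, if $|\lambda_i^{(X)}-\lambda_j^{(Y)}| > \frac{2}{\ell}$ then the nonzero windows of the two arguments sit at band indices differing by at least two, so their supports are disjoint and every product $w_\xi(\lambda_i^{(X)})w_\xi(\lambda_j^{(Y)})$ is zero. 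For the rate-of-change bound I would differentiate term by term: the explicit derivative of $w_\xi$ is bounded in absolute value by $\frac{\pi^2\ell}{4}$, only two terms survive the sum, so $\left\lvert \partial w_{ij}^{(X,Y)}/\partial\lambda_i^{(X)}\right\rvert \le \frac{\pi^2\ell}{2} = O(\ell)$, with the same bound in $\lambda_j^{(Y)}$; the bound with respect to $|\lambda_i^{(X)}-\lambda_j^{(Y)}|$ then follows by the chain rule.

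The main obstacle is the partition-of-unity identity $w_k^2 + w_{k+1}^2 = 1$, the one place where the specific \emph{itersine} design is essential. The trick is to substitute $u = \cos^2(\frac{\pi}{2}(\ell\lambda-k))$ and note that the argument of $w_{k+1}$ is shifted by exactly one, so $\cos(\frac{\pi}{2}(\ell\lambda-k-1)) = \sin(\frac{\pi}{2}(\ell\lambda-k))$ via $\cos(x-\frac{\pi}{2}) = \sin x$; this rewrites $w_{k+1}$ as $\sin(\frac{\pi}{2}(1-u)) = \cos(\frac{\pi}{2}u)$, after which $\sin^2(\frac{\pi}{2}u) + \cos^2(\frac{\pi}{2}u) = 1$ closes the argument. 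Everything else is routine once the ``at most two overlapping windows'' observation is in hand.
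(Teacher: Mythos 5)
Your proposal is correct and follows essentially the same route as the paper: the same support claim (at most two windows $w_k, w_{k+1}$ survive on each band), the same trigonometric partition-of-unity identity via $\cos\left(x - \tfrac{\pi}{2}\right) = \sin(x)$, the same endpoint-matching argument for continuity and differentiability, and the same derivative bound $\left\lvert \tfrac{dw_\xi}{d\lambda} \right\rvert \leq \tfrac{\pi^2 \ell}{4}$ summed over two surviving terms to get $O(\ell)$ in each variable, after which the bound with respect to $\lvert \lambda_i^{(X)} - \lambda_j^{(Y)} \rvert$ follows immediately (the paper states this final corollary without further argument, just as you invoke the chain rule). No gaps; your constant $\tfrac{\pi^2\ell}{2}$ matches the paper's intended bound.
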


\section{Comparison to Wang and Mahadevan}
\label{apx:wang}

Despite being a natural candidate for comparison to our method, unfortunately no standard implementation of the method proposed by~\cite{wang2009manifold} is available. Our implementation of their method performed extremely poorly (worse than random) on the comparisons and is extremely computationally intensive. The method is therefore not shown in the main comparisons; however, for completeness, the results are shown in Figure~\ref{fig:wang}.

\begin{figure}[htbp]
    \centering
    \subfigure[]{\adjustbox{width=0.485\linewidth}{\includegraphics[width=\textwidth]{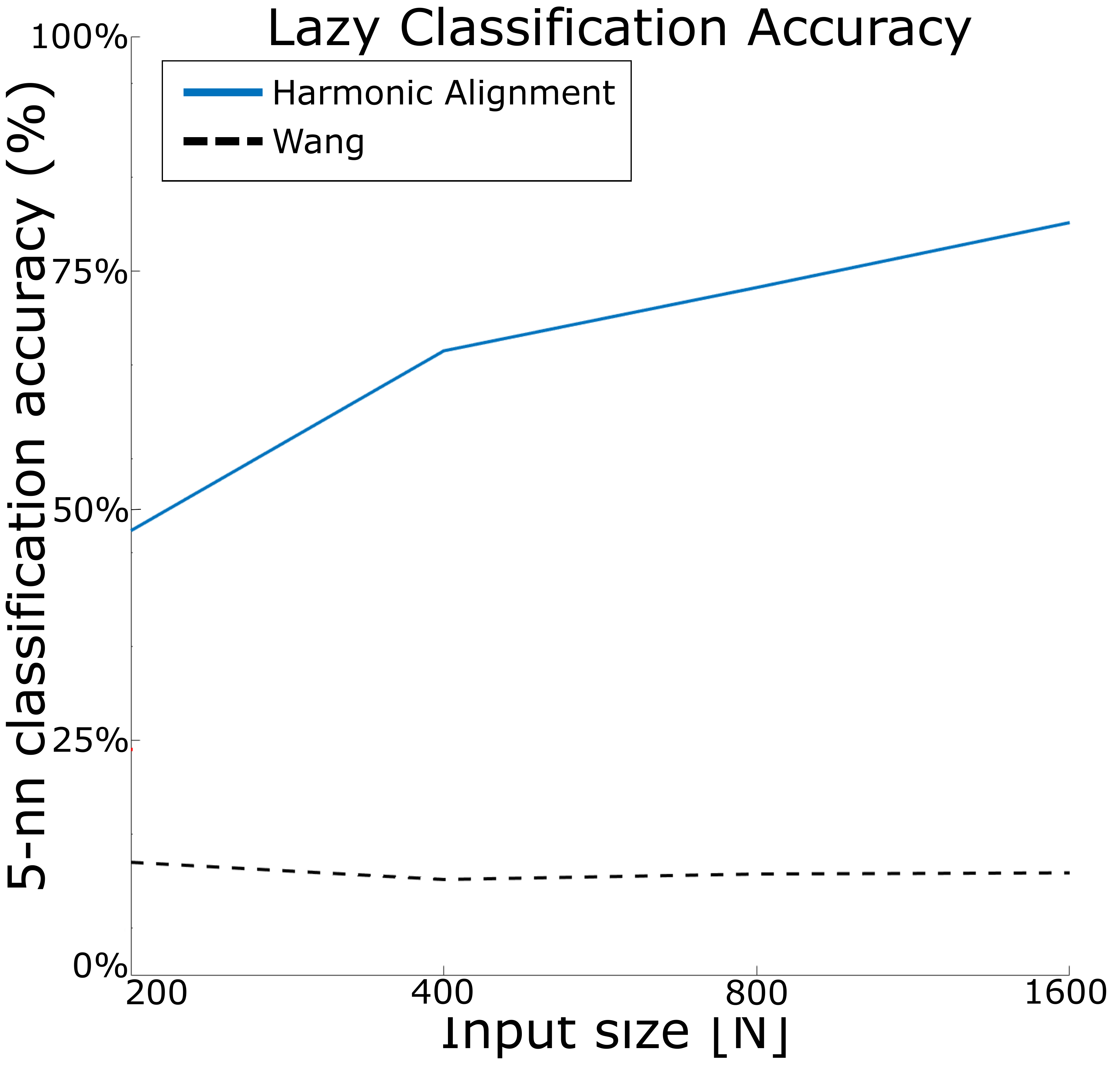}} \label{subfig:compclassification_wang}} \hfill
    \subfigure[]{\adjustbox{width=0.485\linewidth}{\includegraphics[width=\textwidth]{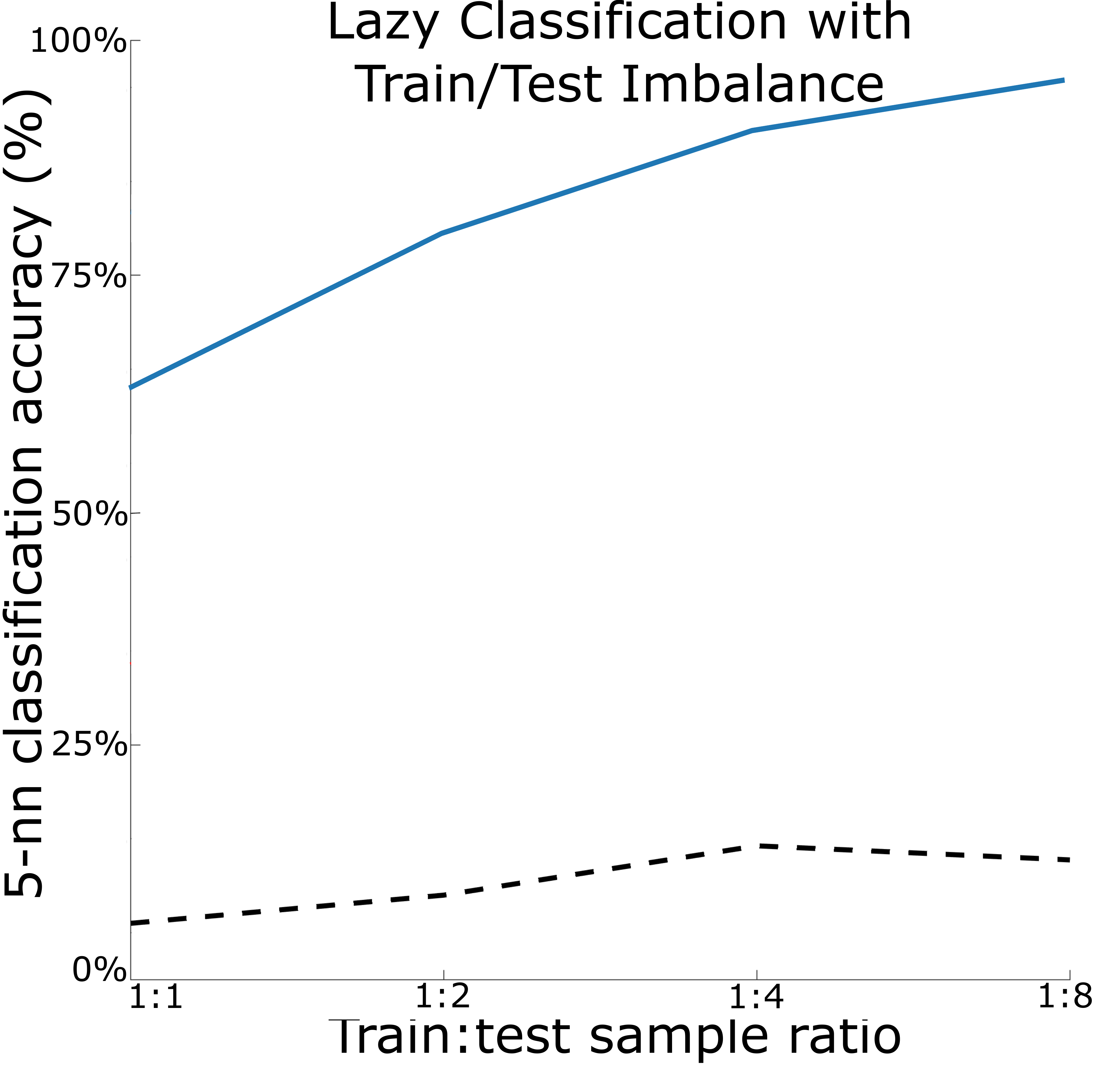}}\label{subfig:transferlearning_wang}}
    \caption{Recovery of k-neighborhoods under feature corruption. Mean over 3 iterations is reported for each method.
    \protect\subref{subfig:compclassification_wang} Lazy classification accuracy relative to input size with unlabeled randomly corrupted digits with 35\% preserved pixels.
    \protect\subref{subfig:transferlearning_wang} Transfer learning performance.  For each ratio, 1K uncorrupted, labeled digits were sampled from MNIST, and then 1K, 2K, 4K, and 8K (x-axis) unlabeled points were sampled and corrupted with 35\% column identity.}
    \label{fig:wang}
\end{figure}

\end{document}